\lstdefinelanguage{Julia}{morekeywords={abstract,break,case,catch,const,continue,do,else,elseif,end,export,false,for,function,immutable,import,importall,if,in,macro,module,otherwise,quote,return,switch,true,try,type,typealias,using,while},sensitive=true,morecomment=[l]\#,morecomment=[n]{\#=}{=\#},morestring=[s]{"}{"},morestring=[m]{'}{'},}[keywords,comments,strings]
\definecolor{cadmiumgreen}{rgb}{0.0, 0.42, 0.24}
\bfseries\color{blue},
\newtheorem{theorem}{Theorem}\newtheorem{definition}{Definition}\newtheorem{remark}{Remark}
\newcommand{\mat}[1]{\mathbf{#1}}
\renewcommand{\vec}[1]{{\boldsymbol{#1}}}
\newcommand{\T}[1]{\mat{#1}^\top}
\newcommand{\avgbeta}{\vec{\bar{\beta}}}
\newcommand{\normalpha}{\hat{\vec{\alpha}}}
\newcommand{\avgbetaat}[1]{\bar{\beta}_{#1}}
\newcommand{\normbeta}{\hat{\vec{\beta}}}
\newcommand{\PLS}{Par\-ti\-tio\-ned\-LS\xspace}
\newcommand{\algoalt}{PartLS-alt\xspace}
\newcommand{\algoopt}{PartLS-opt\xspace}
\newcommand{\algobnb}{PartLS-bnb\xspace}
\newcommand{\subsetsum}{\textsc{subset sum}\xspace}
\begin{document}

\title[Partitioned Least Squares]{Partitioned Least Squares}

\author*[1]{\fnm{Roberto} \sur{Esposito}}\email{roberto.esposito@unito.it}

\author[2]{\fnm{Mattia} \sur{Cerrato}}\email{mcerrato@uni-mainz.de}
\equalcont{These authors contributed equally to this work.}

\author[3]{\fnm{Marco} \sur{Locatelli}}\email{marco.locatelli@unipr.it}
\equalcont{These authors contributed equally to this work.}

\affil*[1]{\orgdiv{Computer Science Department}, \orgname{University of Turin}, \orgaddress{\street{Corso Svizzera 185}, \city{Turin}, \postcode{10125},  \country{Italy}}}

\affil[2]{\orgdiv{Computer Science Department}, \orgname{Johannes Gutenberg Universit\"at Mainz}, \orgaddress{\street{Staudingerweg 9}, \city{Mainz}, \postcode{55112}, \state{}, \country{Germany}}}

\affil[3]{\orgdiv{Engineering and Architecture Department}, \orgname{University of Parma}, \orgaddress{\street{Parco Area delle Scienze 181/A}, \city{Parma}, \postcode{610101}, \state{43124}, \country{Italy}}}

\abstract{
    Linear least squares is one of the most widely used regression methods in many fields. The simplicity of the model allows this method to be used when data is scarce and allows practitioners to gather some insight into the problem by inspecting the values of the learnt parameters. In this paper we propose a variant of the linear least squares model allowing practitioners to partition the input features into groups of variables that they require to contribute similarly to the final result. We show that the new formulation is not convex and provide two alternative methods to deal with the problem: one non-exact method based on an alternating least squares approach; and one exact method based on a reformulation of the problem. We show the correctness of the exact method and compare the two solutions showing that the exact solution provides better results in a fraction of the time required by the alternating least squares solution (when the number of partitions is small). We also provide a branch and bound algorithm that can be used in place of the exact method when the number of partitions is too large as well as a proof of NP-completeness of the optimization problem.
}
  
\keywords{
  Least Squares, Interpretability, Convex Optimization, NP Completeness
}

\maketitle

\section{Introduction}
\label{sec:introduction}
Linear regression models are among the most extensively employed statistical methods in science and industry alike~\citep{bro2002maximum,intriligator1978econometric,isobe1990linear,nievergelt2000tutorial,reeder2004multicoil}. Their simplicity, ease of use and performance in low-data regimes enables their usage in various prediction tasks. As the number of observations usually exceeds the number of variables, a practitioner has to resort to approximating the solution of an overdetermined system. Least squares approximation benefits from a closed-form solution and is perhaps the most well known approach in linear regression analysis. Among the benefits of linear regression models there is the possibility of easily interpreting how much each variate is contributing to the approximation of the dependent variable by means of observing the magnitudes and signs of the associated parameters. 

In some application domains, partitioning the variables in non-overlapping subsets is beneficial either as a way to insert human knowledge into the regression analysis task or to further improve model interpretability. When considering high-dimensionality data, grouping variables together is also a natural way to make it easier to reason about the data and the regression result. As an example, consider a regression task where the dependent variable is the score achieved by students in an University or College exam. A natural way to group the dependent variables is to divide them into two groups where one contains the variables which represent a student's effort in the specific exam (hours spent studying, number of lectures attended...), while another contains the variables related to previous effort and background (number of previous exams passed, number of years spent at University or College, grade average...). Assuming all these variables could be measured accurately, it might be interesting to know how much each group of variables contributes to the student's score. 
As a further example, when analyzing complex chemical compounds, it is possible to group together fine-grained features to obtain a partition which refers to high-level properties of the compound (such as structural, interactive and bond-forming among others), and knowing how much each high-level property contributes to the result of the analysis is often of great practical value~\citep{caron13block}. The LIMPET dataset that we introduce in Section~\ref{sec:experiments} is a clear-cut example of problems with such structure. In the LIMPET dataset, we have a large number of features that can be grouped in well understood high-level structures and where variables in each group necessarily have to contribute in the same direction (i.e., positively or negatively) to the prediction of lipophilicity of the compound under study.

In this paper, we present a novel variation of linear regression that incorporates feature partitioning into discernible groups. This adapted formulation empowers the analyst to exclude unwanted, unrealistic solutions wherein features within a group are assigned parameters of contrary signs. Thus, the analyst is able to inject domain-specific knowledge into the model. Furthermore, the parameters obtained by solving the problem allow one to easily assess the contribution of each group to the dependent variable as well as the importance of each element of the group. 

The newly introduced problem is not easy to solve and indeed we will prove the non-convexity of the objective, and the NP-completeness of the problem itself. In Section~\ref{sec:algorithms} we introduce two possible algorithms to solve the problem. One is based on an Alternate Convex Search method~\citep{wendell1976minimization}, where the optimization of the parameters is iterative and can get trapped into local minima; the other is based on a reformulation of the original problem into an exponential number of sub-problems, where the exponent is the cardinality $K$ of the partition. We prove convergence of the alternating least square algorithm and the global optimality of the result returned by the second approach. We also provide guidance for building a branch and bound~\citep{lawler1966branch} solution that might be useful when the cardinality of the partition is too large to use the exact algorithm. 

We test the two algorithms on several datasets. Our experiments include data extracted from the analysis of chemical compounds~\citep{caron13block} in a particular setting where this kind of analysis already proved to be of value to practitioners, and a number of
datasets having a large amount of features which we selected from the UCI repository~\citep{dua2019uci}: in this latter case the number, size, and composition of the partition has been decided arbitrarily just to experiment with the provided algorithms. Our experimental results show that the exact algorithm is usually a good choice, the non-exact algorithm being preferable when high accuracy is not required and/or the cardinality of the partition is too large. 
Finally, we present and discuss the application of our algorithms to the problem of predicting house prices, showing that the solution provided by our approach leads to more interpretable and actionable results with respect to a least squares model.

While to the best of our knowledge the regression problem and the algorithms we present are novel, there has been previous work dealing with alternative formulations to the linear regression problem. Some of them have shown to be of great practical use and have received attention from both researchers and practitioners. 

Partial Least Squares (PLS) Regression \citep{wold2009pls} is a very popular method in hard sciences such as chemistry and chemometrics. PLS has been designed to address the undesirable behavior of ordinary least squares when the dataset is small, especially when the number of features is large in comparison. In such cases, one can try to select a smaller set of features allowing a better behavior. A very popular way to select important features is to use Principal Component Analysis (PCA) to select the features that contributes most to the variation in the dataset. However, since PCA is based on the data matrix alone, one risks to filter out features that are highly correlated with the target variables in $\mat{Y}$. PLS has been explicitly designed to solve this problem by decomposing $\mat{X}$ and $\mat{Y}$ simultaneously and in such a way to explain as much as possible of the covariance between $\mat{X}$ and $\mat{Y}$~\citep{abdi10partial}. Our work is substantially different from these approaches since we are not concerned at all with the goal of removing variables. On the contrary, we group them so to inject domain knowledge in the model, make the result more interpretable, and to provide valuable information about the importance of each group.

Yet another set of techniques that resembles our work are those where a partition of the variables is used to \emph{select} groups of features. 
Very well known members of this family of algorithms are group lasso methods~\citep{bakin99adaptive,yuan2006model} (\cite{huang2012selective} provide a review of such methodologies). In these works, the authors tackle the problem of selecting grouped variables for accurate prediction. In this case, as in ours, the groups for the variables are defined by the user, but in their case the algorithm needs to predict which subset of the groups will lead to better performances (i.e., either all variables in a group will be used as part of the solution or none of them will be).
This is a rather different problem with respect to the one that we introduce here.
In our case, we shall assume that all groups are relevant to the analysis. However, in our case we seek a solution where all variables in the same group contributes in the same direction (i.e., with the same sign) to the solution. We argue that this formulation allows for an easier interpretation of the contribution of the whole group as well as of the variables included in each group.

Other techniques that bear some resemblance to our proposal are latent class models~\cite{mccutcheon1987latent}.
Latent class models are a categorical extension to factor analysis, trying to relate a set of observed variables to a set of latent variables. 
The value taken by these latter (usually discrete~\cite{mccutcheon1987latent}) variables should explain much of the variance in the former ones.
Our problem formulation, on the other hand, constrains the prediction of a continuous target variable by grouping together sets of observed variables.
While the solutions found by the algorithms we propose in this paper may reveal interesting patterns (see Section~\ref{sec:experiments}), our method is not unsupervised and cannot be straightforwardly used to describe the variation of the data via discrete, unobservable factors.
Our proposal assumes the availability of a dependent, continuous variable which an analyst is interested in predicting.

In this paper we introduce a new least squares problem and provide algorithms to solve it. We note that we presented the original problem formulation for \PLS in a 2019 paper \cite{esposito2019partitioned}. In this follow-up paper, we provide the following new results:
\begin{itemize}
\item a revised definition for the PartitionedLS-b problem (see Section~\ref{sec:algorithms}), which allows for an improved optimality proof; 
    \item a complete proof of optimality for the optimal algorithm \algoopt, only sketched in previous work \cite{esposito2019partitioned};
    \item a proof of NP-completeness for the PartitionedLS problem (not present in previous work);
    \item a new branch-and-bound algorithm that may be used in conjunction with \algoopt when the number of partitions is high;
    \item information about how to update the algorithms to regularize the solutions;
    \item information about how to leverage the non-negative least squares algorithm~\citep{lawson1995solving} to improve numerical stability;
    \item an experimentation of the optimal and the approximate algorithms over three new datasets;
    \item an experiment showing how the branch-and-bound algorithm compares with the enumerative one; 
    \item a new experiment and a discussion of the interpretability of the results obtained by our approach when applied to the problem of predicting house prices;
    \item a comparison of the generalization performances of our method with Least Squares, Partial Least Squares, and Principal Component Regression.
\end{itemize}

 \section{Model description}
\label{sec:model}

\begin{table}[tb]
  \caption{Notation}
  \label{tab:notation}
  \begin{tabular}{lp{0.8\linewidth}}
    \toprule
    Symbol(s) & Definition \\
    \midrule
    $a_i$ & $i$-th component of vector $\vec{a}$. \\[0.5em]
    $(\cdot)_n$ & Shorthand to specify vectors (or matrices) in terms of their components. For instance $(i)_i$ shall denote a vector $\vec{v}$ such that $v_i = i$. \\[0.5em]
    $k$, $K$ & $k$ is the index for iterating over the $K$ subsets belonging to the partition. \\[0.5em]
    $m$, $M$ & $m$ is the index for iterating over the $M$ variables. \\[0.5em]
    $\mat{X}$ & an $N \times M$ matrix containing the descriptions of the training instances. \\[0.5em]
    $\mat{A} \times \mat{B}$ & matrix multiplication operation (we also simply write it $\mat{A}\mat{B}$ when the notation appears clearer).\\[0.5em]
    $\vec{y}$ & a vector of length $N$ containing the labels assigned to the examples in $\mat{X}$. \\[0.5em]
    $\bullet$ & wildcard used in subscriptions to denote whole columns or whole rows: e.g., $\vec{X}_{\bullet,k}$ denotes the $k$-th column of matrix $\mat{X}$ and $\vec{X}_{m,\bullet}$ denotes its $m$-th row. \\[0.5em]
    $\star$ & denotes an optimal solution, e.g., $p^\star$ denotes the optimal solution of the \PLS problem, while $p_b^\star$ denotes the optimal solution of the \PLS-b problem. \\[0.5em]
    $\mat{P}$ & a $M \times K$ partition matrix, $P_{m,k} \in \{0,1\}$, with $P_{m,k} = 1$  iff variable $\alpha_m$ belongs to the $k$-th element of the partition. \\[0.5em]
    $P_k$ & the set of all indices in the $k$-th element of the partition: $\{ m | P_{k,m} = 1\}$.\\[0.5em]
    $k[m]$ & index of the partition element to which $\alpha_m$ belongs, i.e.: $k[m]$ is such that $m \in P_{k[m]}$.\\[0.5em]
    $\circ$ & Hadamard (i.e., element-wise) product. When used to multiply a matrix by a column vector, it is intended that the columns of the matrix are each one multiplied (element-wise) by the column vector. \\[0.5em]
    $\oslash$ & Hadamard (i.e., element-wise) division. \\[0.5em]
    $\succeq$ & element-wise larger-than operator: $\vec{\alpha} \succeq 0$ is equivalent to $\alpha_m \geq 0$ for $m \in {1..M}$.\\
    \botrule
  \end{tabular}
\end{table}

In this work we denote matrices with capital bold letters such as $\mat{X}$ and vectors with lowercase bold letters as $\vec{v}$. In the text we use a regular (non-bold) font weight when we refer to the \emph{name} of the vector or when we refer to scalar values contained in the vector. In other words, we use the bold face only when we refer to the vector itself. For instance, we might say that the values in the $\alpha$ vector are those contained in the vector $\vec{\alpha}$, which contains in position $i$ the scalar $\alpha_i$. 
We consistently define each piece of notation as soon as we use it, but we also report it in Table \ref{tab:notation}, where the reader can more easily access  the whole notation employed throughout the paper.

Let us consider the problem of inferring a linear least squares model to predict a real variable $y$ given a vector $\vec{x} \in \mathbb{R}^M$. We will assume that the examples are available at learning time as an $N \times M$ matrix $\mat{X}$ and $N \times 1$ column vector $\vec{y}$. We will also assume that the problem is expressed in homogeneous coordinates, i.e., that $\mat{X}$ has an additional column containing values equal to $1$, and that the intercept term of the affine function is included into the weight vector $\vec{w}$ to be computed.

The standard least squares formulation for the problem at hand is to minimize the quadratic loss over the residuals, i.e.:

\begin{gather*}
\text{minimize}_{\vec{w}} \| \mat{X}\vec{w} - \vec{y} \|_2^2.
\end{gather*}
This is a problem that has the closed form solution $\vec{w} = (\T{X} \mat{X})^{-1} \T{X} \vec{y}$. 
As mentioned in Section~\ref{sec:introduction}, in many application contexts where $M$ is large, the resulting model is hard to interpret. 
However, it is often the case that domain experts can partition the elements in the weights vector into a small number of groups and that a model built on this partition would provide more accurate results (by incorporating domain knowledge) or/and be much easier to interpret. 
Then, let $\mat{P}$ be a ``partition'' matrix for the problem at hand (this is not a partition matrix in the linear algebra sense, it is simply a matrix containing the information needed to partition the features of the problem). More formally, let $\mat{P}$ be a $M \times K$ matrix where $P_{m,k} \in \{0,1\}$ is equal to $1$ iff feature number $m$ belongs to the  $k$-th partition element. We will also write $P_k$ to denote the set $\{m | P_{m,k} = 1\}$ of all the features belonging to the $k$-th partition element.

Here we introduce the Partitioned Least Squares (\PLS) problem, a model where we introduce $K$ additional variables and express the whole regression problem in terms of these new variables (and in terms of how the original variables contribute to the predictions made using them). The simplest way to describe the new model is to consider its regression function (to make the discussion easier, we start with the data matrix $\mat{X}$ expressed in non-homogenous coordinates and switch to homogenous coordinates afterwards):
\begin{equation}
  \label{eq:f-scalar-form}
  f(\mat{X}) = \left(\sum_{k=1}^K \beta_k \sum_{m \in P_k} \alpha_m x_{n,m} + t\right)_n ,
\end{equation}
i.e., $f({\bf X})$ computes a vector whose $n$-th component is the one reported within parenthesis (see Table~\ref{tab:notation} for details on the notation).
The first summation is over the $K$ sets in the partition that domain experts have identified as relevant, while the second one iterates over all variables in that set. We note that the $m$-th $\alpha$ weight contributes to the $k$-th element of the partition only if feature number $m$ belongs to it. As we shall see, we require that all $\alpha$ values are nonnegative, and that $\forall k: \sum_{m \in P_k} \alpha_m = 1$. Consequently, the expression returns a vector of predictions calculated in terms of two sets of weights: the $\beta$ weights, which are meant to capture the magnitude and the sign of the contribution of the $k$-th element of the partition, and the $\alpha$ weights, which are meant to capture how each feature in the $k$-th set contributes to it. We note that the $\alpha$ weight vector is of the same length as the vector $\vec{w}$ in the least squares formulation. Despite this similarity, we prefer to use a different symbol because the interpretation of (and the constraints on) the $\alpha$ weights are different with respect to the $w$ weights. 

It is easy to verify that the definition of $f$ in (\ref{eq:f-scalar-form}) can be rewritten in matrix notation as:
\begin{align}
  \label{eq:f-matrix-form-non-homogeneous}
  f(\mat{X})  
  & = \left(\sum_{k=1}^K \beta_k \sum_m P_{m,k} \alpha_m  x_{n,m} + t\right)_n \nonumber \\
  & = \mat{X} \times (\mat{P} \circ \vec{\alpha}) \times \vec{\beta} + t, 
\end{align}

where $\circ$ is the Hadamard product extended to handle column-wise products. More formally, if $\mat{Z}$ is a $A \times B$ matrix, $\vec{1}$ is a $B$ dimensional vector with all entries equal to $1$, and $\vec{a}$ is a  column vector of length $A$, then $\mat{Z} \circ \vec{a} \triangleq \mat{Z} \circ (\vec{a} \times \T{1})$; where the $\circ$ symbol on the right hand side of the definition is the standard Hadamard product.
Equation~(\ref{eq:f-matrix-form-non-homogeneous}) can be rewritten in homogeneous coordinates as:
\begin{equation}
  \label{eq:f-matrix-form}
  f(\mat{X}) = \mat{X} \times (\mat{P} \circ \vec{\alpha}) \times \vec{\beta},
\end{equation}
where $\mat{X}$ incorporates a column with all entries equal to 1, and we consider an additional group (with index $K+1$) having a single $\alpha_{M+1}$ variable in it. Given the constraints on $\alpha$ variables, $\alpha_{M+1}$ is forced to assume a value equal to $1$ and the value of $t$ is then totally incorporated into $\beta_{K+1}$. In the following we will assume for ease of notation that the problem is given in homogeneous coordinates and that the constants $M$ and $K$ already account for the additional single-variable group.

\begin{definition}
The partitioned least square (\PLS) problem is formulated as:
\begin{gather*}
\text{minimize}_{\vec{\alpha}, \vec{\beta}} \| \mat{X} \times (\mat{P} \circ \vec{\alpha}) \times \vec{\beta} - \vec{y} \|_2^2 \\
\begin{aligned}
\textup{s.t.}\quad  &\vec{\alpha}  \succeq 0\\
                    &\T{\mat{P}} \times \vec{\alpha} = \vec{1}.
\end{aligned}
\end{gather*}
\end{definition}
\noindent In summary, we want to minimize the squared residuals of $f(\mat{X})$, as defined in (\ref{eq:f-matrix-form}), under the constraint that for each subset $k$ in the partition, the set of weights form a distribution: they need to be all nonnegative as imposed by $\vec{\alpha} \succeq 0$ constraint and they need to sum to $1$ as imposed by constraint $\T{\mat{P}} \times \vec{\alpha} = \vec{1}$.

Unfortunately we do not know a closed form solution for this problem. Furthermore, the problem is not convex and hence hard to solve to global optimality using standard out-of-the-box solvers. Even worse, later on we shall prove that the problem is actually NP-complete. The following theorem states the non-convexity of the objective function formally. 
\vspace{1em}

\begin{theorem}
  The \PLS problem is not convex.
\end{theorem}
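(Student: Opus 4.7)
The plan is to exhibit a concrete small instance of \PLS and show that the objective violates Jensen's inequality on the (convex) feasible set. Since non-convexity is a property that holds whenever \emph{some} instance is non-convex, a single counterexample suffices.

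First I would pick the simplest possible instance that still exposes the bilinear interaction between $\vec{\alpha}$ and $\vec{\beta}$: take $N=1$, $M=2$, $K=1$, data row $\mat{X} = (1,\ -1)$, label $\vec{y} = (1)$, and the partition matrix $\mat{P} = \T{(1,\ 1)}$ that places both variables into the single group. With these choices the constraints reduce to $\alpha_1+\alpha_2=1$, $\alpha_1,\alpha_2\ge 0$, and $\beta\in\mathbb{R}$, while the objective becomes
\begin{equation*}
g(\alpha_1,\alpha_2,\beta) \;=\; \bigl(\beta(\alpha_1-\alpha_2)-1\bigr)^2.
\end{equation*}

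Next I would exhibit two feasible points that sit at the zero level set of $g$ and whose midpoint lies far above it. Concretely, take $P_1=(1,0,1)$ and $P_2=(0,1,-1)$: both are feasible (the $\vec{\alpha}$-parts are nonnegative and sum to $1$) and both satisfy $g(P_i)=0$. Their midpoint $M=(1/2,1/2,0)$ is also feasible, and a direct substitution gives $g(M)=(0-1)^2=1$.

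Since the feasible set is convex (intersection of a single linear equality, nonnegativity constraints, and a free real variable), while
\begin{equation*}
g(M) \;=\; 1 \;>\; 0 \;=\; \tfrac{1}{2}\bigl(g(P_1)+g(P_2)\bigr),
\end{equation*}
Jensen's inequality fails, so $g$ is not convex on the feasible set. This is the entire argument; there is no real obstacle beyond finding the instance. The only thing to be careful about is that the construction relies essentially on $\vec{y}\neq \vec 0$: without the constant shift produced by subtracting $\vec{y}$, symmetric pairs such as $(1,0,1)$ and $(1,0,-1)$ would average to a point of lower objective and the bilinear form $\beta(\alpha_1-\alpha_2)$ squared, while non-convex, would not be witnessed as such by such convenient feasible midpoints. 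The shift by $\vec{y}$ is exactly what breaks this symmetry and lets two distinct zero-error solutions average to a point with strictly positive loss.
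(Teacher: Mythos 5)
Your proof is correct, and it takes a genuinely different route from the paper. The paper argues analytically: it computes the second-order partial derivatives of the loss with respect to $\alpha_1$ and $\beta_1$ and shows, via the Sylvester criterion, that the corresponding $2\times 2$ leading principal minor of the Hessian can be driven negative by growing $\beta_2,\ldots,\beta_K$, so the Hessian is not positive semidefinite. You instead exhibit a single concrete instance ($N=1$, $M=2$, $K=1$) and two zero-loss feasible points whose feasible midpoint has loss $1$, directly violating the midpoint convexity inequality. Your computations check out: $g(1,0,1)=g(0,1,-1)=0$ while $g(1/2,1/2,0)=1$, and the feasible set is indeed convex. What your approach buys is simplicity and robustness --- no differentiation, no minor bookkeeping, and no need to worry about whether the Hessian-based argument is carried out at a feasible point; what the paper's approach buys is generality, since it shows the non-convexity is generic (present in essentially every nontrivial instance) rather than confined to one constructed example, though for the theorem as stated a single counterexample suffices, as you note. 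Two minor remarks: the paper's convention assumes the problem is in homogeneous coordinates with an extra single-variable intercept group, so strictly your instance should carry one more column and group; the counterexample survives unchanged by setting that group's $\beta$ to zero at all three points. Also, your closing observation about the role of $\vec{y}\neq\vec{0}$ is a nice sanity check but is not needed for the argument and could be dropped.
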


\begin{proof}
  It suffices to show that the Hessian of the objective function is not positive semidefinite. 
  The detasils of the proof can be found in \cite{esposito2019partitioned}.

\end{proof}

In the following we will provide two algorithms that solve the above problem. One is an alternating least squares approach which scales well with $K$, but it is not guaranteed to provide the globally optimal solution. The other one is a reformulation of the problem through a (possibly) large number of convex problems whose minimum is guaranteed to be the globally optimal solution of the original problem. Even though the second algorithm does not scale well with $K$, we believe that this \emph{should not be a problem} since the \PLS is by design well suited for a small group of interpretable groups. However, we do sketch a possible branch and bound strategy to mitigate this problem in Section~\ref{sec:branch-and-bound}.

\begin{remark}
The \PLS model presented so far has no regularization mechanism in place and, as such, it risks overfitting the training set. Since the $\alpha$ values are normalized by definition, the only parameters that need regularization are those collected in the $\vec{\beta}$ vector.
Then, the regularized version of the objective function simply adds a penalty on the size of the $\vec{\beta}$ vector:
\begin{equation}
  \label{eq:reg-objective}
  \| \mat{X} \times (\mat{P} \circ \vec{\alpha}) \times \vec{\beta} \|^2_2 + \eta \|\vec{\beta}\|^2_2, 
\end{equation}
where the squared euclidean norm could be substituted with the L1 norm in case a LASSO-like regularization is preferred.
\end{remark}
 \section{Algorithms}
\label{sec:algorithms}

\subsection{Alternating Least Squares approach}
\label{ssec:als}

In the \PLS problem we aim at minimizing a non-convex objective, where the non-convexity depends on the multiplicative interaction between $\alpha$ and $\beta$ variables in the expression $\|  \mat{X} \times (\mat{P} \circ \vec{\alpha}) \times \vec{\beta} - \vec{y}\|_2^2$. Interestingly, if one fixes $\vec{\alpha}$, the expression $ \mat{X} \times (\mat{P} \circ \vec{\alpha})$ results in a matrix $\mat{X}'$ that does not depend on any variable. Then, the whole expression can be rewritten as a problem $p_\vec{\alpha}$ whose objective function $\| \mat{X}' \vec{\beta} - \vec{y}\|_2^2$ depends on the parameter vector $\vec{\alpha}$ and is the convex objective function 
of a standard least squares problem in the $\beta$ variables. In a similar way, it can be shown that by fixing $\vec{\beta}$ one also ends up with a convex optimization problem $p_\vec{\beta}$.
Indeed, after fixing $\vec{\beta}$, the objective function is the squared norm of a vector whose components are affine functions of vector $\vec{\alpha}$ (see Section~\ref{sec:numerical-stability} for more details).
These observations naturally lead to the formulation of an alternating least squares solution where one alternates between solving $p_\vec{\alpha}$ and $p_\vec{\beta}$. In Algorithm~\ref{algo:pls-a} we formalize this intuition into the \algoalt function where, after  initializing $\vec{\alpha}$ and $\vec{\beta}$ randomly, we iterate until some stopping criterion is satisfied (in our experiments we fixed a number $T$ of iterations, but one may want to stop the algorithm as soon as $\vec{a}$ and $\vec{c}$ do not change between two iterations). At each iteration we take the latest estimate for the $\alpha$ variables and solve the $p_\vec{\alpha}$ problem based on that estimate, we then keep the newly found $\beta$ variables and solve the $p_\vec{\beta}$ problem based on them. At each iteration the overall objective is guaranteed not to increase in value and, indeed, we prove that, if the algorithm is never stopped, the sequence of $\vec{\alpha}$ and $\vec{\beta}$ vectors found by \algoalt has at least one accumulation point and that all accumulation points are partial optima\footnote{A partial optima of a function $f(\vec{\alpha}, \vec{\beta})$ is a point $(\vec{\alpha}^\star, \vec{\beta}^\star)$ such that $\forall \vec{\alpha} : f(\vec{\alpha}^\star, \vec{\beta}^\star) \leq f(\vec{\alpha}, \vec{\beta}^\star)$ and $\forall \vec{\beta} : f(\vec{\alpha}^\star, \vec{\beta}^\star) \leq f(\vec{\alpha}^\star, \vec{\beta})$.} with the same function value. 

\begin{theorem}
  Let $\vec{\zeta}_i = (\vec{\alpha}_i, \vec{\beta}_i)$ be the sequence of $\vec{\alpha}$ and $\vec{\beta}$ vectors found by \algoalt to the \PLS problem and assume that the objective function is regularized as described in (\ref{eq:reg-objective}), then:
  \begin{enumerate}
    \item the sequence of $\vec{\zeta}_i$ has at least one accumulation point, and
    \item all accumulation points are partial optima  attaining the same value of the objective function.
  \end{enumerate}
\end{theorem}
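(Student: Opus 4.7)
The plan is to invoke the standard Alternate Convex Search (ACS) convergence argument of \citet{wendell1976minimization}, with the regularization in~(\ref{eq:reg-objective}) supplying the strong convexity needed to control the $\vec{\beta}$-iterates.

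First I would show that $\{\vec{\zeta}_i\}$ lives in a compact set, which gives claim~(1) via Bolzano--Weierstrass. The $\vec{\alpha}$-iterates lie in the feasible polytope $\{\vec{\alpha} \succeq 0 : \T{P}\vec{\alpha} = \vec{1}\}$, a Cartesian product of simplices and hence compact. For the $\vec{\beta}$-iterates, the monotone non-increasing nature of $L_i = L(\vec{\alpha}_i, \vec{\beta}_i)$ (each half-step of ACS solves a convex subproblem to optimality) combined with the regularization gives $\eta\,\|\vec{\beta}_i\|_2^2 \le L_i \le L_0$, uniformly bounding the $\vec{\beta}_i$. Since $\{L_i\}$ is also bounded below by $0$ it converges to some $L^\star$, and continuity of $L$ forces $L(\vec{\zeta}^\star) = L^\star$ at every accumulation point, settling the ``same value'' half of claim~(2).

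The main obstacle is partial optimality of $\vec{\zeta}^\star = (\vec{\alpha}^\star, \vec{\beta}^\star)$. Because $L(\vec{\alpha}, \cdot)$ is $\eta$-strongly convex in $\vec{\beta}$ with $\vec{\beta}_{i+1}$ its unconstrained minimizer at $\vec{\alpha} = \vec{\alpha}_i$, the first-order optimality condition combined with strong convexity gives
\[
L(\vec{\alpha}_i, \vec{\beta}_i) - L(\vec{\alpha}_i, \vec{\beta}_{i+1}) \;\ge\; \eta\,\|\vec{\beta}_i - \vec{\beta}_{i+1}\|_2^2.
\]
The left-hand side telescopes to a finite quantity, so $\vec{\beta}_{i+1} - \vec{\beta}_i \to 0$, and along any convergent subsequence $\vec{\zeta}_{i_j} \to \vec{\zeta}^\star$ I then have $\vec{\beta}_{i_j+1} \to \vec{\beta}^\star$ as well. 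Passing to the limit in $L(\vec{\alpha}_{i_j}, \vec{\beta}_{i_j+1}) \le L(\vec{\alpha}_{i_j}, \vec{\beta})$ yields $L(\vec{\alpha}^\star, \vec{\beta}^\star) \le L(\vec{\alpha}^\star, \vec{\beta})$ for every $\vec{\beta}$. For the symmetric statement in $\vec{\alpha}$ I cannot appeal to strong convexity (only $\vec{\beta}$ is regularized), so instead, for any feasible $\vec{\alpha}$, I would use the minimality of $\vec{\alpha}_{i_j+1}$ to write $L(\vec{\alpha}, \vec{\beta}_{i_j+1}) \ge L(\vec{\alpha}_{i_j+1}, \vec{\beta}_{i_j+1}) = L_{i_j+1}$; letting $j \to \infty$ and invoking continuity of $L$ in $\vec{\beta}$ together with $L_{i_j+1} \to L^\star = L(\vec{\alpha}^\star, \vec{\beta}^\star)$ gives $L(\vec{\alpha}, \vec{\beta}^\star) \ge L(\vec{\alpha}^\star, \vec{\beta}^\star)$, as required. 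The delicate point I expect to dwell on is the summability argument that upgrades monotonicity of $L_i$ into $\vec{\beta}_{i+1} - \vec{\beta}_i \to 0$, since without it the limits in the two inequalities above cannot be identified with $L(\vec{\alpha}^\star, \vec{\beta}^\star)$.
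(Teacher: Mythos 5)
Your proof is correct, but it takes a different route from the paper: the paper's proof is essentially a two-step verification that the hypotheses of Theorem~4.9 of \citep{gorski07biconvex} hold (the iterates lie in a compact set because $\vec{\alpha}$ is confined to a product of simplices and the regularizer bounds $\vec{\beta}$; and the $\vec{\beta}$-subproblem has a unique solution by strict convexity), after which the conclusion is imported wholesale from that reference. You instead re-derive the Alternating Convex Search convergence result from scratch, and your derivation is sound: compactness plus monotonicity of $L_i$ gives claim~(1) and the ``same value'' part of claim~(2); the $\eta$-strong convexity of $L(\vec{\alpha},\cdot)$ supplied by~(\ref{eq:reg-objective}) yields the summable bound $\sum_i \eta\,\|\vec{\beta}_{i+1}-\vec{\beta}_i\|_2^2 \le L_0 - L^\star$, hence $\vec{\beta}_{i+1}-\vec{\beta}_i\to 0$, which is exactly what lets you identify the limits of the two half-step optimality inequalities with $L(\vec{\alpha}^\star,\vec{\beta}^\star)$ and conclude partial optimality in both blocks. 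This is the same mechanism that powers the cited theorem (the ``uniqueness of one block's minimizer'' hypothesis there plays the role of your strong convexity), so the mathematical content coincides; what your version buys is a self-contained, elementary argument with an explicit rate-type inequality, at the cost of length, while the paper's version buys brevity by outsourcing the analysis. One presentational nit: make explicit that $L(\vec{\alpha}_i,\vec{\beta}_{i+1}) \ge L_{i+1}$ (the subsequent $\vec{\alpha}$-step can only decrease the objective) when you telescope, and that the $\vec{\alpha}$-limit is taken over the feasible polytope, which is closed, so the limiting inequality indeed holds for every feasible $\vec{\alpha}$.
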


\begin{proof}
The \PLS problem is actually a biconvex optimization problem and Algorithm~\ref{algo:pls-a} is actually a specific instantiation of the \emph{Alternating Convex Search} strategy~\citep{gorski07biconvex} to solve biconvex problems. Theorem~4.9 in \citep{gorski07biconvex} implies that:
\begin{itemize}
  \item \textbf{if} the sequence $\vec{\zeta}_i$ is contained in a compact set \textbf{then} it has at least one accumulation point, and
  \item \textbf{if} for each accumulation point $\vec{\zeta}^\star$ of the sequence $\vec{\zeta}_i$, either the optimal solution of the problem with fixed $\vec{\alpha}$ is unique, or the optimal solution of the problem with fixed $\vec{\beta}$ is unique; \textbf{then} all accumulation points are partial optima and have the same function value.
\end{itemize}

The first requirement is fulfilled in our case since $\vec{\alpha}$ is constrained by definition into $[0,1]^{M}$, while the regularization term prevents $\vec{\beta}$ from growing indefinitely. The second requirement is fulfilled since for fixed $\vec{\alpha}$ the optimization function is quadratic and strictly convex in $\vec{\beta}$. Hence, the solution is unique.
\end{proof}

\begin{lstlisting}[language=Julia,escapeinside={(@}{@)},numbers=left,float,
  caption=Alternating least squares solution to the \PLS problem. The notation $\mathtt{const}$($\vec{\alpha}$) (respectively $\mathtt{const}$($\vec{\beta}$)) is just to emphasize that the current value of $\vec{\alpha}$ (respectively $\vec{\beta}$) will be used as a constant in the following step. ,label=algo:pls-a]
  function (@\algoalt@)((@$\mat{X}, \vec{y}, \mat{P}$@))
    (@$\vec{\alpha}$@) = random(M)
    (@$\vec{\beta}$@) = random(K)

    for (@$t$@) in (@$ 1 \dots T$@)
      (@$\vec{a}$@) = const((@$\vec{\alpha}$@))
      (@$p^\star$@) = minimize(@$_\vec{\beta}$@)((@$\| (\mat{X} \times (\mat{P} \circ \vec{a}) \times \vec{\beta} - \vec{y} \|_2^2)$@))(@\label{algo:pls-a-min-alpha}@)


      (@$\vec{c}$@) = const((@$\vec{\beta}$@))
      (@$p^\star$@) = minimize(@$_\vec{\alpha}$@)((@$\| (\mat{X} \times (\mat{P} \circ \vec{\alpha}) \times \vec{c} - \vec{y} \|_2^2$@),
              (@$\vec{\alpha} \succeq 0$@),
              (@$\T{\mat{P}} \times \vec{\alpha} = 1$@))(@\label{algo:pls-a-min-beta}@)
    end
  
    return (@$(p^\star, \vec{\alpha}, \vec{\beta} )$@)
  end
\end{lstlisting}

\subsection{Reformulation as a set of convex subproblems}
\label{ssec:pls-b}

Here we show how the \PLS problem can be reformulated as a new problem with binary variables which, in turn, can be split into a set of convex problems 
such that the smallest objective function value among all local (and global) minimizers of these convex problems is also the global optimum value of the \PLS problem.
\begin{definition} The \emph{\PLS-b problem} is a \PLS problem in which the $\beta$ variables are substituted by a binary variable vector $\vec{b} \in \{-1,1\}^K$, and the normalization constraints over the $\alpha$ variables are dropped:
  \begin{gather*}
  \text{minimize}_{\vec{\alpha}, \vec{b}} \| \mat{X} \times (\mat{P} \circ \vec{\alpha}) \times \vec{b} - \vec{y} \|_2^2 \\
  \begin{aligned}
  \textup{s.t.}\quad  &\vec{\alpha}  \succeq 0\\
\quad & \vec{b} \in \{-1,1\}^K. \\
  \end{aligned}
  \end{gather*}
\end{definition}
The \PLS-b problem turns out to be a Mixed Integer Nonlinear Programming (MINLP) problem with a peculiar structure. More specifically, we note that the above definition actually defines $2^K$ minimization problems, one for each of the possible instances of vector $\vec{b}$. Interestingly, each one of the minimization problems can be shown to be convex by the same argument used in Section~\ref{ssec:als} (for fixed $\beta$ variables) and  we will prove that the minimum attained by minimizing all sub-problems corresponds to the global minimum of the original problem. We also show that by simple algebraic manipulation of the result found by a \PLS-b solution, it is possible to write a corresponding \PLS solution attaining the same objective. 

The main breakthrough here derives from noticing that in the original formulation the $\beta$ variables are used to keep track of two facets of the solution: \emph{i)} the magnitude and \emph{ii)} the sign of the contribution of each subset in the partition of the variables. With the $\vec{b}$ vector keeping track of the signs, one only needs to reconstruct the magnitude of the $\beta$ contributions to recover the solution of the original problem.

The following theorem states the equivalence between the \PLS and the \PLS-b problem. More precisely, we will prove that for any feasible solution of one of the two problems, one can build a feasible solution of the other problem with the same objective function value, from which equality between the optimal values of the two problems immediately follows. 
\begin{theorem}
Let $(\vec{\alpha}, \vec{b})$ be a feasible solution of the \PLS-b problem. Then, there exists a feasible solution $(\vec{\normalpha},\vec{\normbeta})$ of the \PLS problem such that:
\begin{equation}
\label{eq:eqobj}
 \| \mat{X} \times (\mat{P} \circ \vec{\alpha}) \times \vec{b} - \vec{y} \|_2^2=\|  \mat{X} \times (\mat{P} \circ \vec{\normalpha}) \times \vec{\normbeta} - \vec{y}\|_2^2.
\end{equation}
Analogously, for each feasible solution $(\vec{\normalpha},\vec{\normbeta})$ of the \PLS problem, there exists a feasible solution $(\vec{\alpha}, \vec{b})$ of the \PLS-b problem such that (\ref{eq:eqobj}) holds.
Finally, $p^\star = p_b^\star$, where $p^\star$ and $p_b^\star$ denote, respectively, the optimal value of the \PLS problem and of the \PLS-b problem.
\end{theorem}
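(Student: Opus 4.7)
The plan is to prove the theorem by exhibiting two explicit feasibility-preserving, objective-preserving transformations, one in each direction, and then deducing the equality $p^\star = p_b^\star$ by a standard two-way inequality argument.

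First I would handle the direction from \PLS-b to \PLS. Given a feasible $(\vec{\alpha}, \vec{b})$ of \PLS-b, I would define, for each $k \in \{1,\dots,K\}$, the partition mass $s_k = \sum_{m \in P_k} \alpha_m$ and set $\hat{\beta}_k = b_k s_k$, together with $\hat{\alpha}_m = \alpha_m / s_k$ for $m \in P_k$. Then $\hat{\vec{\alpha}} \succeq 0$ and $\sum_{m \in P_k} \hat{\alpha}_m = 1$ by construction, so $\T{\mat{P}} \times \hat{\vec{\alpha}} = \vec{1}$ and $(\hat{\vec{\alpha}}, \hat{\vec{\beta}})$ is feasible for \PLS. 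The identity $\hat{\beta}_k \sum_{m \in P_k} \hat{\alpha}_m x_{n,m} = b_k \sum_{m \in P_k} \alpha_m x_{n,m}$ then holds pointwise in $n$, which immediately gives (\ref{eq:eqobj}).

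Next I would do the reverse direction. Given a feasible $(\hat{\vec{\alpha}}, \hat{\vec{\beta}})$ of \PLS, I would set $b_k = \mathrm{sign}(\hat{\beta}_k)$ (with the convention $\mathrm{sign}(0) = 1$, say) and $\alpha_m = |\hat{\beta}_{k[m]}|\, \hat{\alpha}_m$. Clearly $\vec{\alpha} \succeq 0$ and $\vec{b} \in \{-1,1\}^K$, so $(\vec{\alpha},\vec{b})$ is feasible for \PLS-b. Once again the per-partition contribution factorizes as $b_k \sum_{m \in P_k} \alpha_m x_{n,m} = \hat{\beta}_k \sum_{m \in P_k} \hat{\alpha}_m x_{n,m}$, so the residual vectors, and hence the objective values, coincide. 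Combining the two directions shows that each feasible point of either problem has a feasible counterpart in the other with the same objective, so $p^\star \leq p_b^\star$ and $p_b^\star \leq p^\star$, hence $p^\star = p_b^\star$.

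The main obstacle I anticipate is the degenerate case $s_k = 0$ in the first transformation, where the normalization $\hat{\alpha}_m = \alpha_m/s_k$ is undefined. This is a cosmetic but genuine issue because \PLS requires $\T{\mat{P}}\hat{\vec{\alpha}} = \vec{1}$ even when a partition block contributes nothing to the prediction. I would resolve it by assigning, on any block with $s_k = 0$, an arbitrary distribution on $P_k$ (e.g.\ uniform, $\hat{\alpha}_m = 1/|P_k|$) and setting $\hat{\beta}_k = 0$; since the whole block contributes $0$ to $f(\mat{X})$ both before and after the reassignment, the objective is unchanged and feasibility is restored. Aside from this edge case, the proof is essentially the verification of the algebraic identity that licenses the reparameterization between signed magnitudes ($\beta$) and a sign/unnormalized-weight pair $(b, \alpha)$, which is precisely the breakthrough highlighted just before the theorem.
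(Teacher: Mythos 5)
Your proposal is correct and follows essentially the same route as the paper: the same normalization $\hat{\alpha}_m = \alpha_m / s_k$ with $\hat{\beta}_k = b_k s_k$ in one direction, the same sign/magnitude split $\alpha_m = |\hat{\beta}_{k[m]}|\hat{\alpha}_m$ in the other, and the same two-way inequality for $p^\star = p_b^\star$. You also correctly identify and resolve the only delicate point, the degenerate blocks with $s_k = 0$, exactly as the paper does (arbitrary distribution on the block, zero contribution to both sides).
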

\begin{proof}
Let $(\vec{\alpha}, \vec{b})$ be a feasible solution of the \PLS-b problem and let
$\avgbeta$ be a normalization vector containing in $\avgbetaat{k}$ the normalization factor for variables in partition subset $k$:
\[
  \avgbeta = \left( \sum_{m \in P_k} \alpha_m \right)_k = \T{\mat{P}} \times \vec{\alpha}.
\]
Then, for each $m$ such that  $\avgbetaat{k[m]}\neq 0$, we define $\hat{\alpha}_m$ as follows:
\[
\hat{\alpha}_m=\frac{\alpha_m}{\avgbetaat{k[m]}},
\]
while for any $m$ such that $\avgbetaat{k[m]}= 0$ we can define  $\hat{\alpha}_m$, e.g., as follows:
\[
\hat{\alpha}_m=\frac{1}{|P_{k[m]}|}.
\]
In fact, for any $k$ such that $\avgbetaat{k}= 0$, any definition of $\hat{\alpha}_m$ for $m\in P_k$ such that $\sum_{m\in P_k}  \hat{\alpha}_m=1$ would be acceptable.
The $\normbeta$ vector can be reconstructed simply by taking the Hadamard product of $\vec{b}$ and $\avgbeta$:
\[
  \normbeta = \vec{b} \circ \avgbeta.
\]
In order to prove (\ref{eq:eqobj}), we only need to prove that
 \[
    \mat{X} \times (\mat{P} \circ \vec{\alpha}) \times \vec{b} = \mat{X} \times (\mat{P} \circ \normalpha) \times \normbeta.
  \]
The equality is proved as follows:
\begin{align*}
  \mat{X} \times (\mat{P} \circ \normalpha) \times \normbeta & = &
  \mat{X} \times \left(\mat{P} \circ \left(\frac{\alpha_{m}}{\avgbetaat{k[m]}}\right)_m\right) \times \left( b_k \avgbetaat{k} \right)_k \\
    & = &
    \left( \sum_k b_k \avgbetaat{k} \sum_{m \in P_k} \frac{\alpha_{m}}{\avgbetaat{k[m]}} x_{n,m} \right)_n \\
    & = &
    \left( \sum_k b_k \avgbetaat{k} \sum_{m \in P_k} \frac{\alpha_{m}}{\avgbetaat{k}} x_{n,m} \right)_n  \\
    & = &
    \left( \sum_k b_k \sum_{m \in P_k} \alpha_{m} x_{n,m} \right)_n \\
    & = &
    \mat{X} \times (\mat{P} \circ \vec{\alpha}) \times \vec{b},
\end{align*}
where in between row 2 and row 3 we used the fact that $\avgbetaat{k}$ and $\avgbetaat{k[m]}$ are two ways to write the same thing (the former using directly the partition number $k$, and the latter using the notation $k[m]$ to get the partition number from the feature number $m$). To be more precise, we only considered the case when $\avgbetaat{k[m]}\neq 0$ for all $m$. But the result can be easily extended to the case when  $\avgbetaat{k[m]}= 0$ for some $m$, by observing that in this case the corresponding terms give  a null contribution to both sides of the equality.
\newline\newline\noindent
Now, let  $(\vec{\normalpha},\vec{\normbeta})$ be a feasible solution of the \PLS problem. Then, we can build a feasible solution $(\vec{\alpha}, \vec{b})$ for the \PLS-b problem as follows.
For any $k\in \{1,\ldots,K\}$ let:
$$
b_k=
\left\{
\begin{array}{ll}
-1 & \mbox{if}\ \hat{\beta}_k< 0 \\ 
+1 & \mbox{otherwise,}
\end{array}
\right.
$$
while for each $m$, let:
$$
\alpha_m=b_{k[m]} \hat{\beta}_{k[m]} \hat{\alpha}_m.
$$
Equivalence between the objective function values at $(\vec{\normalpha},\vec{\normbeta})$ and $(\vec{\alpha}, \vec{b})$ is proved in a way completely analogous to what we have seen before.
\newline\newline\noindent
Finally, the equivalence between the optimal values of the two problems is an immediate corollary of the previous parts of the proof. In particular, it is enough to observe that for any optimal solution
of one of the two problems, there exists a feasible solution of the other problem with the same objective function value, so that both $p^\star \geq p_b^\star$ and $p^\star \leq p_b^\star$ holds, and, thus, $p^\star = p_b^\star$.
\end{proof}
The complete algorithm, which detects and returns the best solution of the \PLS-b problems by iterating over all possible vectors $\vec{b}$, is implemented by the function \algoopt reported in Algorithm~\ref{algo:pls-b}.
\begin{lstlisting}[language=Julia,escapeinside={(@}{@)},numbers=left,float,caption=\PLS-b solution to the \PLS problem. {The function $\mathtt{extract\_min}$ retrieves the $(\dot{p}, \dot{\vec{\alpha}}, \dot{\vec{\beta}})$ tuple in the results array attaining the lowest $\dot{p}$ value.},label=algo:pls-b]
function (@\algoopt@)((@$\mat{X}, \vec{y}, \mat{P}$@))
  results = []

  for (@$\vec{\dot{b}}$@) in (@$\{1,-1\}^K$@)
    (@$\dot{p}$@) = minimize(@$_{\dot{\vec{\alpha}}}$@)((@$\| (\mat{X} \times (\mat{P} \circ \dot{\vec{\alpha}}) \times \dot{\vec{b}} - \vec{y} \|_2^2)$@),(@ $\dot{\vec{\alpha}} \succeq 0$@)) (@\label{algo:pls-b-min-alpha}@)

    results += ((@$\dot{p},\dot{\vec{\alpha}},\dot{\vec{b}}$@))
  end

  (@$p^\star,\vec{\alpha},\vec{b}$@) = extract_best(results)


  (@$\avgbeta$@) = (@$\T{\mat{P}} \times \vec{\alpha}$@)
  (@$\normalpha$@) = (@$(\mat{P} \circ \vec{\alpha} \oslash  \T{\avgbeta}) \times \vec{1}$@)
  (@$\normbeta$@) = (@$\vec{b} \circ \vec{\avgbeta}$@)

  return (@$(p^\star, \normalpha, \normbeta)$@)
end
\end{lstlisting}
\begin{remark}
When dealing with the \PLS-b problem, the regularization term introduced for the objective function of the \PLS problem, reported in (\ref{eq:reg-objective}), needs to be slightly updated so to accommodate the differences in the objective function when used in Algorithm~\ref{algo:pls-b}. In this second case, since the $\beta$ variables do not appear in the optimization problems obtained  after fixing the different binary vectors $\vec{b}$, the regularization term $\| \vec{\beta} \|^2_2$ is replaced by $\|\T{\mat{P}} \times \vec{\alpha}\|_2^2$. We notice that since the new regularization term is still convex, it does not hinder the convexity of the optimization problems.
\end{remark}

\subsection{Numerical Stability}
\label{sec:numerical-stability}
The optimization problems solved within Algorithms~\ref{algo:pls-a} and \ref{algo:pls-b}, despite being convex, are sometimes hard to solve due to numerical problems. General-purpose solvers often find the data matrix to be ill-conditioned and return sub-optimal results~\cite{bjorck1996numerical,cucker2007mixed}. In this section we show how to rewrite the problems so to mitigate these difficulties. The main idea is to recast the minimization problems as standard least squares and non-negative least squares problems, and to employ efficient solvers for these specific problems rather than the general-purpose ones. 

We start by noticing that the minimization problem at line~\ref{algo:pls-a-min-alpha} of Algorithm~\ref{algo:pls-a} can be easily solved by a standard least square algorithm since the expression $\mat{X} \times (\mat{P} \circ \vec{a} )$ computes to a constant matrix $\mat{X}'$ and the original problem simplifies to the ordinary least squares problem: $\text{minimize}_\vec{\beta}(\|\mat{X}' \vec{\beta} - \vec{y}\|_2^2)$. 

For what concerns the minimization problem at line~\ref{algo:pls-a-min-beta} of the same algorithm, we notice that we can initially ignore the constraint $\mat{P}^\top \times \vec{\alpha} = 1$.
Without such constraint, the problem turns out to be a non-negative least squares problem. Indeed, we note that expression $\mat{X} \times (\mat{P} \circ \vec{\alpha})\times \vec{c}$ can be rewritten as the constant matrix $\mat{X} \circ (\mat{P} \circ \vec{c}^\top \times \vec{1})^\top$ multiplied by the vector $\vec{\alpha}$, so that the whole minimization problem could be rewritten as:
\begin{gather*}
 \text{minimize}_{\vec{\alpha}} \| \mat{X} \circ (\mat{P} \circ \vec{c}^\top \times \vec{1})^\top \times \vec{\alpha} - \vec{y}\|_2^2 \\
 \begin{aligned}
  \textup{s.t.}\quad  &\vec{\alpha}  \succeq 0.\\
  \end{aligned}
\end{gather*}
After such problem has been solved, the solution of the problem including the constraint $\mat{P}^\top \times \vec{\alpha} = 1$ can be easily obtained by dividing each $\alpha$ subset by a normalizing factor and multiplying the corresponding $\beta$ variable by the same normalizing factor (it is the same kind of operations we exploited in Section~\ref{ssec:pls-b}; in that context the normalizing factors were denoted with $\avgbeta$).

In a completely analogous way we can rewrite the minimization problem at line~\ref{algo:pls-b-min-alpha} of Algorithm~\ref{algo:pls-b} as:
\begin{equation}
\begin{gathered}
  \label{eq:pls-b-non-neg}
  \text{minimize}_{\dot{\vec{\alpha}}} \| \mat{X} \circ (\mat{P} \circ \dot{\vec{b}}^\top \times \vec{1})^\top \times \dot{\vec{\alpha}} - \vec{y}\|_2^2 \\
  \begin{aligned}
  \textup{s.t.}\quad  &\dot{\vec{\alpha}}  \succeq 0,\\
  \end{aligned}
\end{gathered}
\end{equation}
which, again, is a non-negative least squares problem.

As previously mentioned, by rewriting the optimization problems as described above and by employing special-purpose solvers for the least squares and the non-negative least squares problems, solutions appear to be more stable and accurate.

\begin{remark}
  Many non-negative least squares solvers do not admit an explicit regularization term. An $l_2$-regularization term equivalent to $\rho \|\vec{\beta}\|_2^2 = \rho \| \mat{P}^\top \times \vec{\alpha} \|_2^2 = \rho \sum_k (\sum_{m \in P_k} \alpha_m)^2$ can be implicitly added by augmenting the data matrix $\mat{X}$ with $K$ additional rows. The trick is done by setting all the additional $y$ to $0$ and the $k$-th additional row as follows:
\[
  x_{N+k,m} =
  \begin{cases}
    \sqrt{\rho} & \mbox{if } m \in P_k \\
    0 & \mbox{otherwise.}
  \end{cases}
\]
When the additional $k$-th row and the additional $y$ are plugged into the expression inside the norm in (\ref{eq:pls-b-non-neg}), the expression evaluates to:
\[
  \sum_{m \in P_k} \sqrt{\rho} \ \dot{b}_k \alpha_m - 0 = \dot{b}_k  \sqrt{\rho}\sum_{m \in P_k} \alpha_m,
\]
which reduces to $\rho \sum_k (\sum_{m \in P_k} \alpha_m)^2$ when squared and summed over all the $k$ as a result of the evaluation of the norm.

\end{remark} \subsection{An alternative branch-and-bound approach}
\label{sec:branch-and-bound}
Algorithm \ref{algo:pls-b} is based on a {\em complete enumeration} of all possible $2^K$ vectors $\vec{b}$. Of course, such an approach becomes too expensive as soon as $K$ gets large. As already previously commented,  \algoopt is by design well suited for small $K$ values, so that complete enumeration should be a valid option most of the times. However, for the sake of completeness, in this section we discuss a branch-and-bound approach, based on {\em implicit enumeration}, which could be employed as $K$ gets large. Pseudo-code detailing the approach is reported in Algorithm~\ref{algo:pls-bnb}.

\begin{lstlisting}[language=Julia,escapeinside={(@}{@)},numbers=left,float,
  caption={Pseudo code for a depth-first implementation of the branch and bound optimization of \algoopt. \ub is the current upper bound of the optimal value, $\Sigma$ is the set of constraints associated to the current node. For the sake of simplicity, the algorithm returns only the optimal values. It is easy to modify it to keep track of the best solution as well. \texttt{lower\_bound} computes the relaxation of either~(\ref{eq:quadform}) or~(\ref{eq:quadform2}) subject to the constraints in $\Sigma$ and returns the lower bound $lb$ (the lower bound itself) and $\vec{\alpha}$ (the values of the variables attaining it).}, label=algo:pls-bnb]
  function PartLS-bnb((@$\mat{X}, \vec{y}, \mat{P}, \ub, \Sigma$@))
    (@$\mat{Q}$@) = (@ $\mat{X}^\top \mat{X}$ @)
    (@$\vec{q}$@) = (@$ -2 \mat{X}^\top\vec{y} $@)
    (@$q_0$@) = (@$ \vec{y}^\top\vec{y} $@)

    (@$ lb, \vec{\alpha} $@) = lower_bound((@$\mat{Q}, \vec{q}, q_0, \Sigma$@))
    if (@$lb \geq \ub$@)
      # No solution better than the upper 
      # bound can be found in the current 
      # branch
      return (@$ \ub $@)   
    end

    (@$ \vec{\nu} = \left(\sum_{i,j \in P_k} \max\{0, -\alpha_i \alpha_j\} \right)_k $@)
    if (@$\vec{\nu} = \vec{0}$@)
      # optimal solution found for this branch
      # we can avoid further splitting
      # constraints
      return (@$ lb $@)
    end

    (@$k = \arg\max_k v_k$ @)

    (@$\Sigma_+$@) = (@$ \Sigma \cup \{ \forall i \in P_k: \alpha_i \geq 0 \} $@)
    (@$\Sigma_-$@) = (@$ \Sigma \cup \{ \forall i \in P_k: \alpha_i \leq 0 \} $@)

    (@$\mu_+ $@) = PartLS-bnb(@($\mat{X}, \vec{y}, \mat{P}, \ub, \Sigma_+ )$@)
    (@$\mu_- $@) = PartLS-bnb(@($\mat{X}, \vec{y}, \mat{P}, \min(\ub, \mu_+), \Sigma_- )$@)

    return (@$ \min(\ub, \mu_+, \mu_-) $@)
  end
\end{lstlisting}

First, we remark that the \PLS-b problem can be reformulated as follows
\begin{equation}
\label{eq:refplsb1}
\begin{array}{ll}
  \text{minimize}_{\vec{\alpha}} & \sum_n \left(\sum_k \sum_{m\in P_k} \alpha_m x_{n,m} -y_n\right)^2 \\
  \textup{s.t.}\quad & \alpha_i \alpha_j \geq 0\ \ \ \forall i,j\in P_k,\ \ \ \forall k\in \{1,\ldots,K\},
  \end{array}
\end{equation}
where we notice that vector $\vec{b}$ and the nonnegativity constraints $\vec{\alpha}  \succeq 0$ have been eliminated, and replaced by the new constraints, which impose that for any $k$, all variables $\alpha_m$ such that $m\in P_k$ must have the same sign.   The new problem is a quadratic one with a convex quadratic objective function and simple (but non-convex) bilinear constraints. We note that, having removed the $\vec{b}$ variables, the scalar objective do not need the distinction between groups anymore and it can rewritten as 
$\sum_n \left( \sum_m \alpha_m x_{n,m}  - y_n\right)^2$ or, in matrix form, as 
$\| \mat{X} \vec{\alpha} -y \|^2 = (\mat{X} \vec{\alpha} - \vec{y})^\top (\mat{X} \vec{\alpha} - \vec{y})$. 
Hence, we can reformulate the problem as follows
\begin{equation}
\label{eq:quadform}
\begin{array}{ll}
\text{minimize}_{\vec{\alpha}} & \vec{\alpha}^\top\mat{Q}\vec{\alpha}+\vec{q}^\top \vec{\alpha} + q_0 \\
\textup{s.t.}\quad & \alpha_i \alpha_j \geq 0\ \ \ \forall i,j\in P_k,\ \ \ \forall k\in \{1,\ldots,K\},
\end{array}
\end{equation}
where $Q = \mat{X}^\top \mat{X}$, $\vec{q} = -2\mat{X}^\top \vec{y}$, and $q_0 = \vec{y}^\top \vec{y}$.
Different lower bounds for this problem can be computed. The simplest one is obtained by simply removing all the constraints, which results in an unconstrained convex quadratic problem.
A stronger, but more costly, lower bound can be obtained by solving the classical semidefinite relaxation of quadratic programming problems. First, we observe that problem (\ref{eq:quadform}) can be rewritten as follows (see \citep{Shor87})
\begin{equation}
\label{eq:quadform2}
\begin{array}{ll}
\text{minimize}_{\vec{\alpha}, \mat{A}} & \mat{Q}\bullet \mat{A} +\vec{q}^\top \vec{\alpha} + q_0 \\
\textup{s.t.}\quad  & \mat{A}_k=\vec{\alpha}_{P_k}\vec{\alpha}_{P_k}^\top\ \ \ \forall k\in \{1,\ldots,K\} \\
& \mat{A}_k \geq \mat{O}\ \ \ \forall k\in \{1,\ldots,K\},
\end{array}
\end{equation}
where $\mat{Q}\bullet \mat{A}=\sum_{i,j} Q_{ij} A_{ij}$, and $\vec{\alpha}_{P_k}$ is the restriction of $\vec{\alpha}$ to the entries in $P_k$, $k\in\{1,\ldots,K\}$.
Next, we observe that the equality constraint $\mat{A}=\vec{\alpha}\vec{\alpha}^\top$ is equivalent to requiring that $\mat{A}$ is a psd (positive semidefinite) matrix {\em and} is of rank one. If we remove the (non-convex) rank one requirement, we end up with the following convex relaxation of (\ref{eq:quadform}) requiring the solution of a semidefinite programming problem:
$$
\begin{array}{ll}
\text{minimize}_{\vec{\alpha}, \mat{A}} & \mat{Q}\bullet \mat{A} +\vec{q}^\top \vec{\alpha} + q_0 \\
\textup{s.t.}  & 
\mat{A}_k-\vec{\alpha}_{P_k}\vec{\alpha}_{P_k}^\top\ \ \mbox{is psd} \ \ \ \forall k\in \{1,\ldots,K\} \\
& \mat{A}_k \geq \mat{O}\ \ \ \forall k\in \{1,\ldots,K\}.
\end{array}
$$
Note that by Schur complement, constraint ``$\mat{A}_k-\vec{\alpha}_{P_k}\vec{\alpha}_{P_k}^\top\ \ \mbox{is psd}$'' is equivalent to the following semidefinite constraint:
$$
\left(
\begin{array}{cc}
1 & \vec{\alpha}_{P_k}^\top \\
\vec{\alpha}_{P_k} & \mat{A}_k
\end{array}
\right) \ \ \mbox{is psd.}
$$
No matter which problem we solve to get a lower bound, after having solved it we can consider the vector $\vec{\alpha}^\star$ of the optimal values of the $\alpha$ variables at its optimal solution and we can compute the following quantity for each
$k\in \{1,\ldots,K\}$
$$
\nu_k=\sum_{i,j\in P_k} \max\{0,-\alpha_i^\star\alpha_j^\star\}.
$$
If $\nu_k=0$ for all $k$, then the optimal solution of the relaxed problem is feasible and also optimal for the original problem (\ref{eq:quadform}) and we are done. Otherwise, we can select an index $k$ such that $\nu_k>0$ (e.g., the largest one, corresponding to the largest violation of the constraints), and split the original problem into two subproblems, one where we impose that all variables $\alpha_m$, $m\in P_k$, are nonnegative, and the other where we impose that all variables $\alpha_m$, $m\in P_k$, are nonpositive. Lower bounds for the new subproblems can be easily computed by the same convex relaxations employed for the original problem (\ref{eq:quadform}), but with the additional constraints. The violations $\nu_k$ are computed also for the subproblems and, in case one of them is strictly positive, the corresponding subproblem may be further split into two further subproblems, unless its lower bound becomes at some point larger than or equal to the current global upper bound of the problem, which is possibly updated each time a new feasible solution of   
(\ref{eq:quadform}) is detected. As previously commented, Algorithm~\ref{algo:pls-bnb} provides a possible implementation  of the branch-and-bound approach. More precisely, Algorithm~\ref{algo:pls-bnb} is an implementation where nodes of the branch-and-bound tree are visited in a depth-first manner. An alternative implementation is, e.g., the one where nodes are visited in a lowest-first manner, i.e., the first node to be visited is the one with the lowest lower bound.
 \section{Complexity}
In this  section we establish the theoretical complexity of the \PLS-b problem. In view of reformulation (\ref{eq:refplsb1}), it is immediately seen that the cases where $|P_k|=1$ for all $k=1,\ldots,K$, are polynomially solvable. Indeed, in this situation problem  (\ref{eq:refplsb1}) becomes unconstrained and has a convex quadratic objective function. Here we prove that as soon as we move from $|P_k|=1$ to $|P_k|=2$, the problem becomes NP-complete. We prove this by showing that each instance of the NP-complete problem \subsetsum (see, e.g., \citep{garey1979computers}) can be transformed in polynomial time into an instance of problem  (\ref{eq:refplsb1}).
We recall that problem  \subsetsum is defined as follows. Let $s_1,\ldots,s_k$ be a collection of $K$ positive integers. We want to establish whether there exists a partition of this set of integers into two subsets such that the sums of the integers belonging to the two subsets is equal, i.e., whether there exist $I_1, I_2\subseteq \{1,\ldots,K\}$ such that:
\begin{equation} 
\label{eq:subsetsum}
I_1\cup I_2=\{1,\ldots,K\},\ I_1\cap I_2=\emptyset,\ \  \sum_{k\in I_1} s_k = \sum_{k\in I_2} s_k.
\end{equation}

Now, let us consider an instance of problem (\ref{eq:refplsb1}) with $K$ partitions and two variables $\alpha_{m_{1,k}}$ and $\alpha_{m_{2,k}}$ for each partition $k$ (implying $M=2K$). The data matrix $\mat{X}$ and vector $\vec{y}$ have $N=3K+1$ rows  defined as follows (when $k$ and $m$ are not restricted, they are assumed to vary on $\{1\ldots K\}$ and $\{1 \ldots M\}$ respectively):
\[
\begin{array}{ll}
    x_{k,m_{1,k}}=1,\\ 
    x_{k,m_{2,k}}=-1,\\ 
    x_{k,m}=0,\ y_k=-s_k & m \not \in \{m_{1,k}, m_{2,k}\}  \vspace{0.5cm}\\
    x_{K+k,m_{1,k}}=\sqrt{\rho},
    x_{K+k,m_{2,k}}=0, \\
    x_{K+k,m}=0, \\
    y_{K+k}=0 & m \not \in \{m_{1,k}, m_{2,k}\}  \vspace{0.5cm} \\
    x_{2K+k,m_{1,k}}=0,\\
    x_{2K+k,m_{2,k}}=\sqrt{\rho},\\
    x_{2K+k,m}=0,\ y_{2K+k}=0  & m \not \in \{m_{1,k}, m_{2,k}\} \vspace{0.5cm}\\
    x_{3K+1,m}=1,\ y_{3K+1}=0. \ 
\end{array}
\]
When the values so defined are plugged into problem (\ref{eq:refplsb1}) we obtain:
\begin{equation}
\begin{aligned}
\label{eq:polytrans}
\text{minimize}_{\vec{\alpha}} & \sum_{k=1}^K (\alpha_{m_{1,k}}-\alpha_{m_{2,k}}-s_k)^2 + \rho\sum_{k=1}^K (\alpha_{m_{1,k}})^2 +  \\ & \quad \rho \sum_{k=1}^K (\alpha_{m_{2,k}})^2 + \left[\sum_{k=1}^K (\alpha_{m_{1,k}}+\alpha_{m_{2,k}})\right]^2  \\ 
\text{s.t.} \quad & \alpha_{m_{1,k}}\alpha_{m_{2,k}}\geq 0 \quad \forall k \in \{1,\ldots,K\},
\end{aligned}
\end{equation}
with $\rho>0$. 

We prove the following theorem, which states that an instance of
the \subsetsum problem (\ref{eq:subsetsum}) can be solved by solving the corresponding instance (\ref{eq:polytrans}) of  problem (\ref{eq:refplsb1}), and, thus, establishes NP-completeness of the \PLS-b problem.
\begin{theorem}
The optimal value of (\ref{eq:polytrans}) is equal to 
$$
\frac{\rho \sum_{k=1}^K s_k^2}{1+\rho},
$$
if and only if there exist $I_1, I_2$
such that (\ref{eq:subsetsum}) holds, i.e., if and only if
the \subsetsum problem admits a solution.
\end{theorem}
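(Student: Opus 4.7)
My plan is to transform the problem so the same-sign constraints become clean quadratic inequalities, bound the resulting objective from below, and observe that the equality case collapses exactly to the \subsetsum condition. Write $u_k = \alpha_{m_{1,k}}$ and $v_k = \alpha_{m_{2,k}}$ and perform the rotation $w_k = u_k - v_k$, $z_k = u_k + v_k$. The identities $u_k^2 + v_k^2 = (w_k^2 + z_k^2)/2$ and $4 u_k v_k = z_k^2 - w_k^2$ convert $u_k v_k \geq 0$ into $z_k^2 \geq w_k^2$ and rewrite the objective of (\ref{eq:polytrans}) as
\[
F \;=\; \sum_k (w_k - s_k)^2 + \frac{\rho}{2}\sum_k w_k^2 + \frac{\rho}{2}\sum_k z_k^2 + \Bigl(\sum_k z_k\Bigr)^2.
\]

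The core step is a two-stage inequality chain. Replacing $z_k^2$ by its lower bound $w_k^2$ in the third term, and discarding the nonnegative $(\sum_k z_k)^2$ term, yields
\[
F \;\geq\; \sum_k (w_k - s_k)^2 + \rho\sum_k w_k^2,
\]
a strictly convex quadratic in $w$ whose unique minimizer is $w_k^\star = s_k/(1+\rho)$, with minimum value $\frac{\rho}{1+\rho}\sum_k s_k^2$. Hence every feasible $\vec{\alpha}$ satisfies $F \geq \frac{\rho}{1+\rho}\sum_k s_k^2$, giving the lower bound direction uniformly.

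To finish, I would read off the three conditions under which all the preceding inequalities are simultaneously tight: (i) $z_k^2 = w_k^2$, so $z_k = \epsilon_k w_k$ for some $\epsilon_k \in \{-1,+1\}$; (ii) $\sum_k z_k = 0$; and (iii) $w_k = s_k/(1+\rho)$ for all $k$, forced by the uniqueness of the reduced-quadratic minimizer (this is the one place where $\rho > 0$ is essential). Plugging (iii) into (i)--(ii) collapses the system to $\sum_k \epsilon_k s_k = 0$, i.e.\ the \subsetsum condition with $I_1 = \{k : \epsilon_k = +1\}$ and $I_2 = \{k : \epsilon_k = -1\}$. Conversely, any \subsetsum solution yields an explicit feasible point attaining the bound: choose the $\epsilon_k$ accordingly, set $w_k = s_k/(1+\rho)$ and $z_k = \epsilon_k w_k$, and unrotate to $u_k = (w_k + z_k)/2$, $v_k = (z_k - w_k)/2$; in each partition exactly one of $u_k, v_k$ vanishes, so $u_k v_k = 0$ is feasible.

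I do not anticipate a serious obstacle; the only subtle point is verifying that every one of (i)--(iii) is \emph{forced} at the optimum rather than merely \emph{achievable}. This follows because the bounding quadratic in $w$ has a unique minimizer and the other two inequalities are term-by-term in the original variables, leaving no slack that could be traded off between them.
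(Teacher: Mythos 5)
Your proof is correct, and it takes a genuinely different route from the paper's. The paper decomposes the objective into $K$ restricted two-dimensional problems plus the coupling term, and finds the minimizers of each restricted problem by a full KKT analysis: it checks the constraint qualification, handles the irregular point at the origin separately, enumerates the KKT points via the identity $(\mu-2\rho)(\alpha_{m_{1,k}}+\alpha_{m_{2,k}})=0$, and exhibits the two global minima $\bigl(\tfrac{s_k}{1+\rho},0\bigr)$ and $\bigl(0,-\tfrac{s_k}{1+\rho}\bigr)$ explicitly; it then evaluates the omitted term $\bigl[\sum_k(\alpha_{m_{1,k}}+\alpha_{m_{2,k}})\bigr]^2$ at the resulting $2^K$ minimizers of the reduced problem. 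Your rotation $w_k=u_k-v_k$, $z_k=u_k+v_k$ replaces all of that machinery: the sign constraint becomes $z_k^2\geq w_k^2$, the objective separates as $\sum_k(w_k-s_k)^2+\tfrac{\rho}{2}\sum_k w_k^2+\tfrac{\rho}{2}\sum_k z_k^2+(\sum_k z_k)^2$, and both the lower bound and the characterization of the equality case fall out of an elementary term-by-term inequality chain with no Lagrange multipliers and no discussion of constraint qualifications. What your approach buys is brevity and transparency of the equality analysis (every unit of slack is visible in a single displayed expression); what the paper's buys is an explicit catalogue of all $2^K$ global minimizers of the reduced problem and a template that would survive if the per-block subproblems lacked such a clean closed form. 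Two small points worth making explicit in a polished write-up: the ``only if'' direction needs the infimum to be attained, which follows because the objective dominates $\rho(\|\vec{u}\|^2+\|\vec{v}\|^2)$ and is therefore coercive on the closed feasible set; and the extraction of $\epsilon_k\in\{-1,+1\}$ from $z_k^2=w_k^2$ uses $w_k=s_k/(1+\rho)\neq 0$, which holds because the $s_k$ are positive integers.
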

\begin{proof}
As a first step we derive the optimal solutions of the following restricted two-dimensional problems for $k\in \{1,\ldots,K\}$:
\begin{equation}
\label{eq:restricted}
\begin{array}{lll}
\text{minimize}_{\alpha_{m_{1,k}}, \alpha_{m_{2,k}}} &  (\alpha_{m_{1,k}}-\alpha_{m_{2,k}}-s_k)^2 + \\
& \quad \rho (\alpha_{m_{1,k}})^2+\rho (\alpha_{m_{2,k}})^2  & \\
\text{s.t.} & \alpha_{m_{1,k}}\alpha_{m_{2,k}}\geq 0. &
\end{array}
\end{equation}
This problems admits at least a global minimizer since its objective function is strictly convex quadratic. 
Global minimizers should be searched for among {\em regular} KKT points and {\em irregular} points. Regular points are those who fulfill a {\em constraint qualification}. In particular, in this problem all feasible points, except the origin, fulfill
the constraint qualification based on the linear independence of the gradients of the active constraints. This is trivially true since there is a single constraint and the gradient of such constraint is null only at the origin.
Thus, the only irregular point is the origin. In order to detect the KKT points, we first write down
the KKT conditions:
$$
\begin{array}{l}
2(\alpha_{m_{1,k}}-\alpha_{m_{2,k}}-s_k)+ 2 \rho \alpha_{m_{1,k}} -\mu \alpha_{m_{2,k}}=0 \\
-2(\alpha_{m_{1,k}}-\alpha_{m_{2,k}}-s_k)+ 2 \rho \alpha_{m_{2,k}} -\mu \alpha_{m_{1,k}}=0 \\
\alpha_{m_{1,k}}\alpha_{m_{2,k}}\geq 0 \\
\mu \alpha_{m_{1,k}}\alpha_{m_{2,k}}=0 \\
\mu\geq 0,
\end{array}
$$
where $\mu$ is the Lagrange multiplier of the constraint.
We can enumerate all KKT points of problem (\ref{eq:restricted}). By summing up the first two equations, we notice that
$$
(\mu -2\rho)(\alpha_{m_{1,k}}+\alpha_{m_{2,k}})=0,
$$
must hold. This equation is satisfied if:
\begin{itemize}
\item either $\alpha_{m_{1,k}}+\alpha_{m_{2,k}}=0$, which implies $\alpha_{m_{1,k}}=\alpha_{m_{2,k}}=0$,  in view of $\alpha_{m_{1,k}}\alpha_{m_{2,k}}\geq 0$. As previously mentioned, the origin is the unique irregular point. So, it is not a KKT point but when searching
for the global minimizer, we need
to compute the objective function value also at such point and this is equal to $s_k^2$;
\item or $\mu=2\rho>0$, which implies, in view of the complementarity condition, that  $\alpha_{m_{1,k}}\alpha_{m_{2,k}}= 0$, and, after substitution in the first two equations, we have the two KKT points
$$
\left(\frac{s_k}{1+\rho},0\right),\ \ \ \left(0,-\frac{s_k}{1+\rho}\right).
$$
The objective function value at both these KKT points is equal to $\frac{\rho}{1+\rho} s_k^2$, lower than the objective function value at the origin, and, thus, these KKT points are the two global minima of the restricted problem (\ref{eq:restricted}).
\end{itemize}
Based on the above result, we have that problem
$$
\begin{array}{lll}
\text{minimize}_\vec{\alpha} & \sum_{k=1}^K (\alpha_{m_{1,k}}-\alpha_{m_{2,k}}-s_k)^2 + \\ 
& \quad\rho\sum_{k=1}^K (\alpha_{m_{1,k}})^2 +\rho \sum_{k=1}^K (\alpha_{m_{2,k}})^2  & \\
\text{s.t.} & \alpha_{m_{1,k}}\alpha_{m_{2,k}}\geq 0 \quad  \forall k \in \{1,\ldots,K\},
\end{array}
$$
which is the original one (\ref{eq:polytrans})  without the last term $\left[\sum_{k=1}^K (\alpha_{m_{1,k}}+\alpha_{m_{2,k}})\right]^2$, and which can be split into the $K$ subproblems (\ref{eq:restricted}), has global minimum value
equal to  $\frac{\rho \sum_{k=1}^K s_k^2}{1+\rho}$  and $2^K$ global minima defined as follows: for each $I_1, I_2\subseteq \{1,\ldots, K\}$ such that $I_1\cap I_2=\emptyset$ and $I_1\cup I_2=\{1,\ldots,K\}$,
$$
\alpha_{m_{1,k}}^\star=
\left\{\begin{array}{ll}
\frac{s_k}{1+\rho} & k\in I_1 \\
0 & k\not\in I_1,
\end{array}
\right.\ \ \ 
\alpha_{m_{2,k}}^\star=
\left\{\begin{array}{ll}
-\frac{s_k}{1+\rho} & k\in I_2 \\
0 & k\not\in I_2.
\end{array}
\right.
$$
Now, if we replace these coordinates in the omitted term  $\left[\sum_{k=1}^K (\alpha_{m_{1,k}}+\alpha_{m_{2,k}})\right]^2$, we have the following
$$
\left[\sum_{k=1}^K (\alpha_{m_{1,k}}^\star+\alpha_{m_{2,k}}^\star)\right]^2=\frac{1}{(1+\rho)^2}\left[\sum_{k\in I_1} s_k -\sum_{k\in I_2} s_k\right]^2,
$$
which is equal to 0 for some $I_1, I_2$  if and only if the \subsetsum problem admits a solution. As a consequence the optimal value of problem (\ref{eq:polytrans}) is equal to
$\frac{\rho \sum_{k=1}^K s_k^2}{1+\rho}$ if and only if the \subsetsum problem admits a solution, as we wanted to prove.
\end{proof}
 \section{Experiments}
\label{sec:experiments}

\definecolor{plotgreen}{RGB}{40,180,35}
\definecolor{plotorange}{RGB}{225,147,15}
\definecolor{plotblue}{RGB}{31,119,180}

In this section, we present the experimental findings obtained through the application of the algorithms proposed in this paper.

In Subsection~\ref{ssec:exp-time-vs-obj}, we investigate the properties in terms of regression performance and runtime of \algoopt, \algoalt, and \algobnb, providing insights about when one should be preferred over the other.

In Subsection~\ref{ssec:interpretability}, we provide an example of interpreting the solution provided by our approach. Unfortunately, interpretability is not easily measurable and is, in general, highly task-dependent~\cite{doshi2017towards}.
Nonetheless, previous research has discussed interpretability of models across multiple dimensions, such as simulatability, decomposability and algorithmic transparency~\citep{lipton16mythos}.
To show the benefits of framing a regression task as a partitioned least squares problem, we report an experiment analyzing the solution found by the \algoopt algorithm on an additional dataset (the Ames House Prices dataset).
In particular, we will show that the grouped solution found via the Partitioned Least Squares formulation is arguably more simulatable and decomposable compared to the more commonly employed ``feature-by-feature'' linear regression solutions.
Finally, in Subsection~\ref{ssec:model-quality}, we compare the generalization performances of our approach with those of least squares and two established variants: Partial Least Squares (PLS) and Principal Component Regression (PCR).

\pgfplotsset{yticklabel style={text width=3em,align=right}}

\begin{figure}
\centering
\begin{subfigure}{0.49\textwidth}
\begin{tikzpicture}
\begin{axis}[
    scale=0.7,
    xmode=log,
    xmin=0, xmax=10,
    grid=both,
    major grid style={black!50}
]
\addplot[orange, mark=*] table [x=TimeCumulative, y=TrainBest, col sep=comma]{Limpet-results-ALT-20.csv};
\addplot[NavyBlue, mark=*] table [x=TimeCumulative, y=TrainBest, col sep=comma]{Limpet-results-ALT-100.csv};
\addplot[OrangeRed, mark=*] table [x=TimeCumulative, y=TrainBest, col sep=comma]{Limpet-results-OPT.csv};
\addlegendentry{PartLS-Alt T20}
\addlegendentry{PartLS-Alt T100}
\addlegendentry{PartLS-Opt}
\end{axis}
\end{tikzpicture}
\caption{Limpet}
\label{fig:limpet}
\end{subfigure}
\begin{subfigure}{0.49\textwidth}

\begin{tikzpicture}
\begin{axis}[
    scale=0.7,
    xmode=log,
grid=both,
    major grid style={black!50}
]
\addplot[orange, mark=*] table [x=TimeCumulative, y=TrainBest, col sep=comma]{Supercond-results-ALT-20.csv};
\addplot[NavyBlue, mark=*] table [x=TimeCumulative, y=TrainBest, col sep=comma]{Supercond-results-ALT-100.csv};
\addplot[OrangeRed, mark=*] table [x=TimeCumulative, y=TrainBest, col sep=comma]{Supercond-results-OPT.csv};
\end{axis}
\end{tikzpicture}
    
\caption{Superconductivity}
\label{fig:second}
\end{subfigure}
\hfill
\begin{subfigure}{0.49\textwidth}
\begin{tikzpicture}
\begin{axis}[
    scale=0.7,
    xmode=log,
grid=both,
    major grid style={black!50}
]
\addplot[orange, mark=*] table [x=TimeCumulative, y=TrainBest, col sep=comma]{Facebook-results-ALT-20.csv};
\addplot[NavyBlue, mark=*] table [x=TimeCumulative, y=TrainBest, col sep=comma]{Facebook-results-ALT-100.csv};
\addplot[OrangeRed, mark=*] table [x=TimeCumulative, y=TrainBest, col sep=comma]{Facebook-results-OPT.csv};
\end{axis}
\end{tikzpicture}    \caption{Facebook Comment Volume}
    \label{fig:third}
\end{subfigure}
\begin{subfigure}{0.49\textwidth}
\begin{tikzpicture}
\begin{axis}[
    scale=0.7,
    xmode=log,
grid=both,
    major grid style={black!50}
]
\addplot[orange, mark=*] table [x=TimeCumulative, y=TrainBest, col sep=comma]{YearPrediction-results-ALT-20.csv};
\addplot[NavyBlue, mark=*] table [x=TimeCumulative, y=TrainBest, col sep=comma]{YearPrediction-results-ALT-100.csv};
\addplot[OrangeRed, mark=*] table [x=TimeCumulative, y=TrainBest, col sep=comma]{YearPrediction-results-OPT.csv};
\end{axis}
\end{tikzpicture}
    
    \caption{YearPredictionMSD}
    \label{fig:fourth}
\end{subfigure}

\caption{Plot of the behavior of the two proposed algorithms on four datasets. \algoalt has been repeated 100 times following a multi-start strategy and in two settings (\textcolor{plotblue}{$T$=20} and \textcolor{plotorange}{$T$=100}). Each point on the orange and blue lines reports the cumulative time and best objective found during these 100 restarts. \algoopt outputs a single solution, drawn in \textcolor{plotgreen}{green}.}
\label{fig:time-vs-obj-figures}

\end{figure}

\begin{table}
\centering
\caption{Summary of dataset statistics. We stress that the number of feature groups may be adapted depending on the analyst's needs and the constraints they wish to impose on the regressor.}
\label{tab:dataset-summary}
\begin{tabular}{llll}
\toprule
Dataset                 & Rows  & Columns & \# Feature Groups \\ 
\midrule
Limpet                  & 82    & 44      & 6                 \\ 
Facebook Comment Volume & 40000 & 53      & 5                 \\ 
Superconductivity       & 10000 & 81      & 7                 \\ 
YearPredictionMSD       & 10000 & 90      & 9                 \\ 
Ames House Regression   & 2931  & 79      & 10                \\ 
\botrule
\end{tabular}

\end{table}

\subsection{Runtime vs. Solution Quality}\label{ssec:exp-time-vs-obj}

We start by experimenting with \algoopt and \algoalt and  on four regression problems on the following datasets: \emph{Limpet}, \emph{Facebook Comment Volume}, \emph{Superconductivity}, and \emph{YearPredictionMSD}. Details about these datasets may be found in the Appendix.
We choose these datasets because of their relatively high number of features. In particular, the Limpet dataset had already been the subject of a block-relevance analysis in previous literature~\citep{ermondi12molecular,caron13block}.
We ran \algoalt (Algorithm~\ref{algo:pls-a})  in a multi-start fashion with 100 randomly generated starting points. 
The four panels in Figure~\ref{fig:time-vs-obj-figures} report the best objective value obtained during these random restarts along with the cumulative time needed to obtain that value (so the rightmost point will plot the cumulative time of the 100 restarts versus the best objective obtained in the whole experiment).
We repeated the experiment using two different values of parameter $T$ (number of iterations), setting it to $20$ and $100$, respectively. So for a single random restart with $T=20$ (or $T=100$), Algorithm~\ref{algo:pls-a} will alternate $20$ ($100$) times before returning. 
As one would expect, we see that increasing the value of parameter $T$ slows down the algorithm, but allows it to converge to better solutions.

The experiments confirm that \algoopt retrieves more accurate solutions, as expected due to its global optimality property established in Section \ref{ssec:pls-b}. Depending on the dataset, this solution may be either cheaper or more costly to compute compared to the approximate solution obtained by \algoalt. Notably, in typical scenarios, the alternating least squares approach, \algoalt, outperforms \algoopt in terms of running time only when the total number of iterations (and thus the total number of convex subproblems to be solved) is smaller than $2^K$. 
However, in our experimentation, this often results in solutions that grossly approximate the optimal one. Consequently, we find that \algoopt is likely preferable in most cases, providing an optimal solution within a reasonable timeframe, often even quicker than \algoalt. Furthermore, although the alternating algorithm can occasionally deliver a solution faster than \algoopt, which might be deemed "good enough", its iterative nature introduces a degree of uncertainty.

Clearly there are cases where the number of groups or where the time required to solve a single convex problem is very large. In these cases, when approximate solutions are acceptable for  the application at hand, \algoalt could be a very compelling solution. We conclude by noting that a use case with a large number of groups appears to us not very plausible. In fact, it could be argued that the reduced interpretability of the results defies one of the main motivations behind employing the Partitioned Least Squares model in the first place. 

It is worth mentioning that, in case a problem with a large $K$ were to arise, the \algobnb algorithm (see Algorithm~\ref{algo:pls-bnb})  is likely to allow users to retrieve the optimal solution more efficiently than \algoopt. We propose here a further experiment with synthetic data, through which we show when it is convenient to switch from \algoopt to the Branch-and-Bound approach implemented in \algobnb and discussed in Section \ref{sec:branch-and-bound}.
In all the previously discussed experiments the cardinality $K$ of the partition is relatively small. Thus, \algoopt is able to solve the related problems efficiently. 
However, the computing times of \algoopt quickly increases exponentially as $K$ increases. In these cases, a Branch-and-Bound approach is a much better choice. To better clarify this point, we report in Table \ref{tab:diffinst} the results on synthetic data obtained by randomly generating in the interval $[-10,10]$ the entries of $\mat{X}$, by generating $\vec{y}$ by adding some random noise generated in the interval $[-50,50]$ to each entry of a target solution
$\vec{y}_{\text{ref}}=\mat{X}\vec{w}_{\text{ref}}$, and by randomly generating the $K$ sets in the partition. In the table we compare the computing times (in seconds), for different values $K,N,M$, of \algoopt and of Algorithm~\ref{algo:pls-bnb} (with lower bounds at branch-and-bound nodes computed through the solution of least squares problems with additional non-negativity constraints). A $-$ denotes a computing time exceeding 1,000 seconds. The results clearly show that, as $K$ increases,  a branch-and-bound approach is much more efficient than \algoopt.
\begin{table}
\caption{Computing times (in seconds) for \algoopt and Algorithm ~\ref{algo:pls-bnb}, for different values of $K,N,M$. \label{tab:diffinst}}
\centering
\begin{tabular}{ccccc}
\toprule
$K$ & $N$ & $M$  & Time \algoopt & Time BB \\
\midrule
10 & 100 & 400 &  3.62 & 0.81 \\ 
10 & 100 & 600 &  3.74 & 0.58  \\ 
10 & 150 & 400 & 9.50 & 1.15  \\ 
10 & 150 & 600 & 9.63 & 2.52  \\ 
10 & 200 & 400 &  14.90 & 2.09  \\ 
10 & 200 & 600 &  15.19 & 7.05  \\ 
15 & 100 & 400 &  141.71 & 1.70  \\ 
15 & 100 & 600 &  151.29 & 1.65  \\ 
15 & 150 & 400 &  346.65 & 5.95  \\ 
15 & 150 & 600 &  403.24 & 4.31  \\ 
15 & 200 & 400 &  525.27 & 8.24  \\ 
15 & 200 & 600 &  595.25 & 18.26  \\ 
20 & 100 & 400 &  - & 1.71  \\ 
20 & 100 & 600 &  - & 1.42  \\ 
20 & 150 & 400 &  - & 13.26  \\ 
20 & 150 & 600 &  - & 3.67  \\ 
20 & 200 & 400 &  - & 22.57  \\ 
20 & 200 & 600 &  - & 6.60  \\
\botrule
\end{tabular}
\end{table}

\subsection{Interpretability on Ames House Prices}
\label{ssec:interpretability}

We present here an analysis of a solution found by \algoopt on the Ames House Prices dataset, which is publicly available via Kaggle~\cite{kaggle}.
This dataset has a relatively high number of columns --- 79 in total --- each detailing one particular characteristic of housing properties in Ames, Iowa. 
The task is to predict the selling price of each house. 

We propose a grouping of the features into 10 groups, each one representing a high-level characteristic of the property (see Table~\ref{tab:ames-groups}).
As an example, we collect 6 columns referring to the availability and quality of air conditioning systems, electrical system, heating and fireplaces in a ``Power and Temperature'' group. 
Other feature groups refer to overall quality of the construction work and materials employed (``Building Quality''), external facilities such as garages or swimming pools (``Outside Facilities''). 
We show the feature groups we designed and the $\beta$ values found by  \algoopt \footnote{In this the regularization parameter has been set to $\rho=10$} in Figure~\ref{fig:ames-house-beta}. 
We note that the grouped solution enabled by the partitioned least squares formulation is able to give a high-level summary of the regression result. 
An analyst is therefore able to communicate easily to, e.g. an individual selling their house, that the price is mostly determined by the building quality and the attractiveness of the lot. 
A deeper analysis is of course possible by investigating the $\alpha$ values found by the algorithm.
For instance, we report the $\alpha$s contributions for the ``Outside Facilities'' group in figure~\ref{fig:ames-house-alpha}. 
Here, one is able to notice that garage quality has the biggest impact on the property's price, which is potentially actionable knowledge. 
\begin{figure}[t]
	\centering
	\includegraphics[width=\columnwidth]{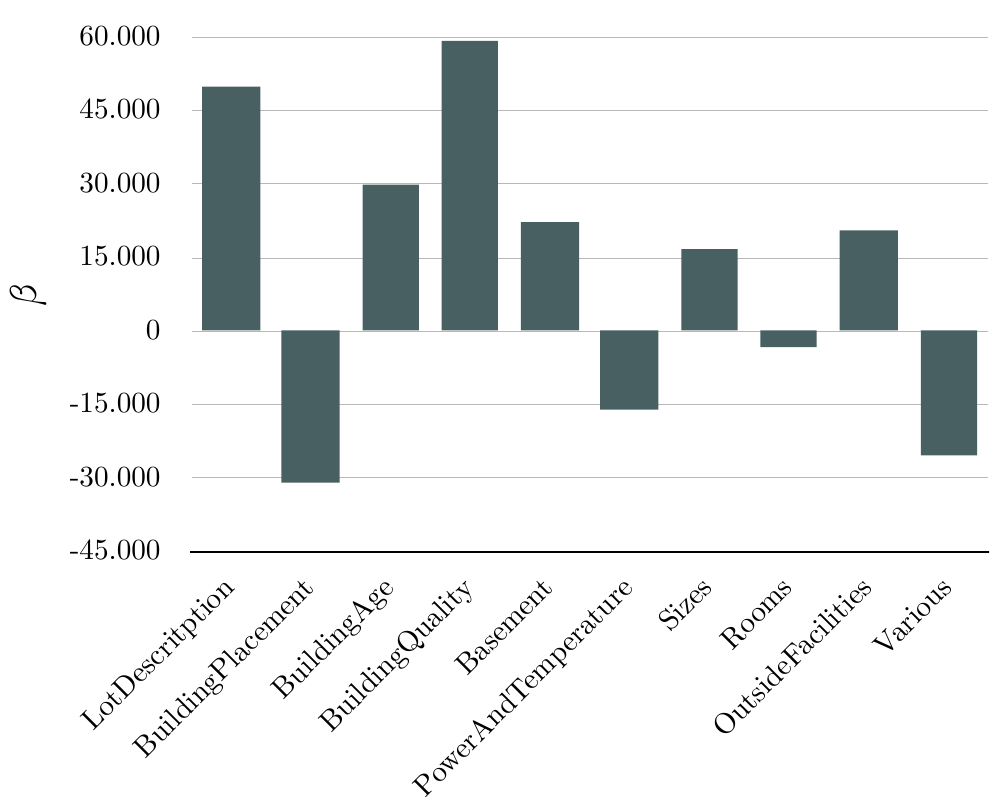}
	\caption{\label{fig:ames-house-beta}Feature groups and associated $\beta$ values as found by \algoopt on the Ames House Prices dataset~\cite{kaggle}.}
\end{figure}

\begin{figure}[t]
	\centering
	\includegraphics[width=\columnwidth]{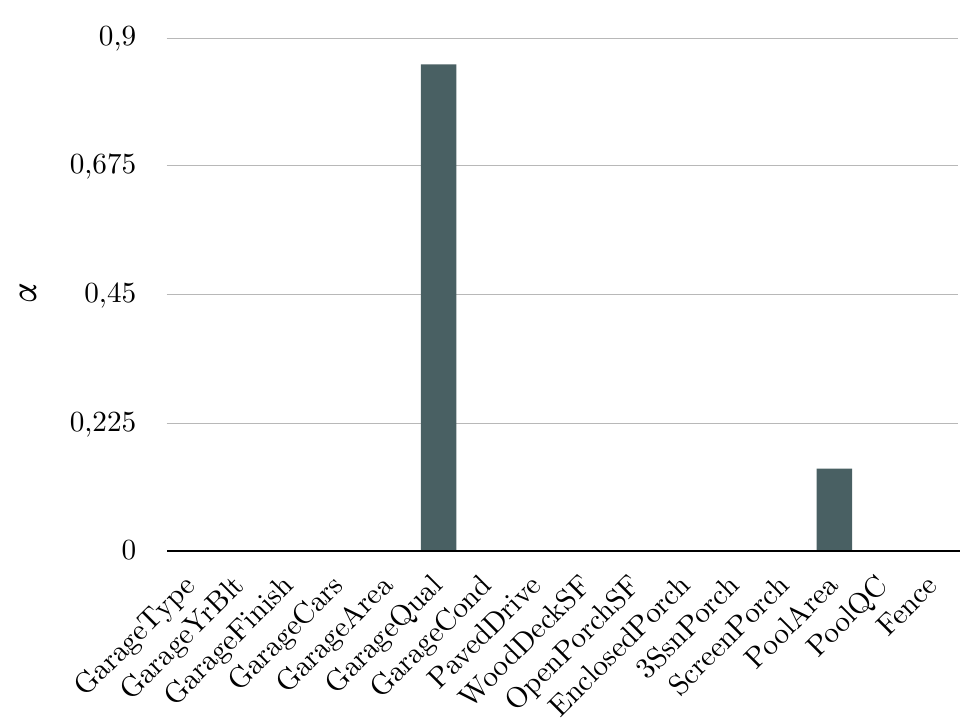}
	\caption{\label{fig:ames-house-alpha}$\alpha$ values for group ``Outside Facilities'' found by \algoopt on the Ames House Prices dataset~\cite{kaggle}.}
\end{figure}
In Figure~\ref{fig:lsq-ames-house-outside}, we report the weights of the features in the ``Outside Facilities'' group as learnt by the least squares algorithm.
\begin{figure}
\includegraphics[width=\columnwidth]{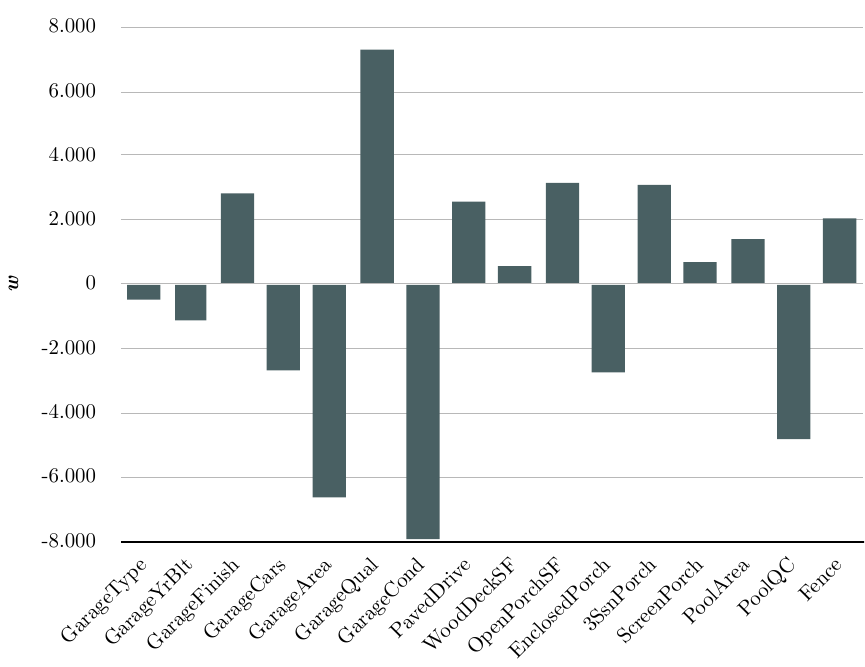}
\caption{\label{fig:lsq-ames-house-outside}Feature weights for group ``Outside Facilities'' of a regularized linear regression on the Ames House Prices dataset~\cite{kaggle}.}
\end{figure}
We argue that the group- and feature-level analysis made possible by our contributions improves on the interpretability of ungrouped linear regression.
While linear regression is a relatively simple model and therefore intuitively satisfies some notion of transparency, previous work has established that this is not necessarily the case.
Lipton~\cite{lipton16mythos} discusses interpretability of models around three separate dimensions: simulatability, decomposability and algorithmic transparency. Simulatability may be achieved when a person is able to contemplate all the model at once in reasonable amount of time. 
While the amount of time is of course subjective, Lipton stresses the fact that linear models may not be simulatable if a high number of features are involved. 
On the other hand, the partitioned least squares formulation we propose finds a higher-level, grouped solution via the $\beta$ values. Thus, a practitioner would be able to build a simpler mental model of the solution by focusing on the groups rather than the individual features.

\subsection{Quality of the Inferred Model}\label{ssec:model-quality}

\newcommand{\stdf}[1]{\ \mbox{\tiny $\pm$ #1}}

\begin{table}[p]

\caption{Train/Test mean squared errors of Least Squares (LS), Principal Component Regression (PCR), Partial Least Squares (PLS) and Partitioned Least Squares (PartLS). Partitioned Least Squares has been run with several configurations of the partition matrix: in ``P from DK'' experiments the partition matrix is built from domain knowledge; in ``P arbitrary'' the partition matrix is arbitrary (used instead of DK when domain knowledge was not available); in ``P from LS'' we use a binary partition matrix based on the signs of the attributes in the solution of the LS problem computed on the training set; in ``P opt'' the partition matrix is built as in ``P from LS'', but the LS is run over the complete dataset. We report the average and the standard deviation of results over 100 repetitions based on different train/test splits. Bold faced averages are statistically better at the 99\% significance level (using a dpaired t-test). \label{tab:msq-errors}}

\begin{tabular}{llr@{}p{4em}r@{}p{4em}}
\toprule
Dataset & Method & \multicolumn{2}{c}{Train} & \multicolumn{2}{c}{Test} \\
\midrule
\multirow{5}{*}{Artificial} & LS & 0.1759 & \stdf{0.04} & 0.8060 & \stdf{0.16} \\
 & PCR & 23.5646 & \stdf{4.48} & 30.3264 & \stdf{2.26}\\
 & PLS & 0.5060 & \stdf{0.10} & 2.0636 & \stdf{0.64}\\
 & PartLS: P from LS & 0.1759 & \stdf{0.04} & 0.8060 & \stdf{0.16} \\
 & PartLS: P from DK & 0.1885 & \stdf{0.05} & \textbf{0.6845} & \stdf{0.11} \\
 & PartLS: P opt & 0.1885 & \stdf{0.05} & \textbf{0.6845} & \stdf{0.11} \\[1em]
\multirow{5}{*}{Limpet} & LS & 0.0000 & \stdf{0.00} & 4.1391 & \stdf{3.95} \\
 & PCR & 0.7085 & \stdf{0.13} & \textbf{1.0548} & \stdf{0.39}\\
 & PLS & 0.6364 & \stdf{0.76} & \textbf{1.2695} & \stdf{1.65} \\
 & PartLS: P from LS & 0.0000 & \stdf{0.00} & 6.9099 & \stdf{12.56} \\
 & PartLS: P from DK & 0.0000 & \stdf{0.00} & 46.3429 & \stdf{47.30} \\
 & PartLS: P opt & 0.0000 & \stdf{0.00} & 4.4367 & \stdf{4.20} \\[1em]

\multirow{5}{*}{Facebook} & LS & 2785.6553 & \stdf{450.79} & 3012.9160 & \stdf{725.57} \\
 & PCR & 3672.9801 & \stdf{503.71} & 3687.1780 & \stdf{789.64} \\
 & PLS & 3765.2822 & \stdf{509.81} & 3784.4111 & \stdf{798.20} \\
 & PartLS: P from LS & 2785.6553 & \stdf{450.79} & 3012.9160 & \stdf{725.57} \\
 & PartLS: P arbitrary & 2850.4851 & \stdf{457.82} & 3023.1348 & \stdf{733.19} \\
  & PartLS: P opt & 2791.9103 & \stdf{450.93} & \textbf{2988.1926} & \stdf{722.48} \\[1em]

\multirow{5}{*}{Year Prediction} & LS & 89.9616 & \stdf{1.97} & 91.8708 & \stdf{2.20} \\
& PCR & 115.2383 & \stdf{2.39} & 115.5236 & \stdf{2.71} \\
& PLS & 3.42E5 & \stdf{6.8E3} & 3.5E5 & \stdf{1.1E4} \\
& PartLS: P from LS & 89.9616 & \stdf{1.97} & 91.8708 & \stdf{2.20} \\
& PartLS: P arbitrary & 100.2951 & \stdf{2.02} & 101.8315 & \stdf{2.28} \\
& PartLS: P opt & 90.1798 & \stdf{1.97} & \textbf{91.7042} & \stdf{2.18} \\[1em]

\multirow{5}{*}{Superconductivity} & LS & 307.6365 & \stdf{3.68} & 310.8739 & \stdf{5.76} \\
& PCR & 762.7407 & \stdf{6.77} & 764.4683 & \stdf{11.35} \\
& PLS & 566.8627 & \stdf{5.25} & 567.6426 & \stdf{8.32} \\
& PartLS: P from LS & 307.6365 & \stdf{3.68} & 310.8739 & \stdf{5.76} \\
& PartLS: P arbitrary & 376.8152 & \stdf{3.91} & 379.3639 & \stdf{5.87} \\
& PartLS: P opt & 307.7331 & \stdf{3.68} & \textbf{310.7671} & \stdf{5.76} \\
\botrule
\end{tabular}

\end{table}

While one of the major benefits of the Partitioned Least Squares problem is in simplifying the interpretation of the results, it should be self-evident that this would be a pointless exercise if the returned hypothesis were not at least comparable to other widely used techniques in terms of generalization capabilities. In this section, we investigate generalization performances of regressors learnt by \algoopt and compare them with Least Squares (LS), Principal Component Regression (PCR) and Partial Least Squares (PLS). All experiments are repeated 100 times on different train/test splits. 

We experiment on the four datasets earlier in this section and on an additional dataset \textit{Artificial} which we created for this specific test. The main goal of this dataset is to showcase a situation where we have complete and accurate domain knowledge about the partition. The artificial dataset contains $70$ training samples and $930$ test samples. Samples contains feature values randomly sampled from a normal distribution. This dataset's target variable may be computed without cross-partition feature interactions: Specifically, the target is computed as $\mathbf{y} = \mathbf{X} \times (\mathbf{P} \circ \boldsymbol{\alpha}) \times \boldsymbol{\beta} + t$, where $\mathbf{P}$ is a partition in 5 sets having cardinalities $5,10,4,12,6$. The $\mathbf{X}$ matrix has been perturbed with gaussian noise with mean $0$ and standard deviation $0.05$ after generating the target column.
$t$ and $\boldsymbol{\alpha}$ have been generated using a uniform distribution in $[0,1]$. $\boldsymbol{\alpha}$ has then been normalized so that 
$P^\top \times \boldsymbol{\alpha} = \mathbf{1}$. The $\boldsymbol{\beta}$ are the normalization factors used to ensure $P^\top \times \boldsymbol{\alpha} = \mathbf{1}$ multiplied by $11, 4, 2, 1$, and $3$. Signs of the groups have been set to $-1, 1, 1, -1,$ and $1$. These latter parameters have been set arbitrarily and without tuning. 

Results are reported in Table~\ref{tab:msq-errors}. When the test error of a method is significantly better\footnote{According to a paired t-test at the 99\% confidence level.} than the competitors, it is shown in bold. If more than one result is in bold, then the bold-faced results are not significantly better with respect to each other, but are significantly better than all the remaining results. 

The PCR algorithm has been run setting the maximal number of principal components equal to the number of groups in the $P$ matrix. We conducted experiments with \algoopt three times, utilizing three distinct partitioning methods: one based on the partitions devised by ourselves (``P arbitrary'' or ``P from DK''), one where features were grouped based on their signs in the solution found by LS in the experiment with the same train/test split (``P from LS''), and one (``P opt'') using this same methodology but on the results of LS run on the full dataset (i.e., before the train/test split). This latter experiment aims at simulating a situation where the domain knowledge closely matches the ``natural'' partitioning of the columns of the dataset. 
The settings ``P from LS'' and ``P opt'' aim to demonstrate that, with the correct partitioning, the algorithm converges to an optimal solution. 

We start discussing these results by noting that, for all datasets except Limpet where the problem is heavily under-determined, LS and \algoopt on ``P from LS'' yield identical results. This is expected as the problems are equivalent from the optimizer perspective. Furthermore, results of \algoopt on ``P opt'' show that, when the provided partition is accurate, the inductive bias allows for better generalization in most situations.

On the Artificial dataset, \algoopt on ``P from DK'' significantly outperforms the competitors. This shows once more that, when the correct partitioning is provided, \algoopt exhibits an inductive bias that enhances generalization. On this dataset \algoopt attains the same results when using ``P from DK'' and ``P opt''. This is expected since ``P from DK'' is built using perfect knowledge of the $P$ matrix, and we verified that the signs of the features found by LS on the complete dataset induce a partition that can be formally shown to be equivalent to the one that have been used to generate the data. 

The PCR and the PLS algorithms are the clear winners on the Limpet dataset. The dataset matrix is under-determined and collinear, which is the ideal case for these techniques. In all the other cases, their inductive biases significantly hinder the algorithms performances, most likely because the number of principal components guessed on the basis of the $P$ matrix was not sufficient to explain enough variance in the data.  Setting the number of maximal number of principal components to be equal to the number of features does not seem to change much the results: either they converge to the LS solution, or they obtain result not too distant from the ones presented in Table~\ref{tab:msq-errors}.

For Facebook, Year Prediction, and Superconductivity datasets, \algoopt yields the best performances when equipped with the ``P opt'' partition. It lags a little behind LS when equipped with the partitions we used in our previous experiments (``P arbitrary''), which is totally reasonable since those partitions were chosen to showcase the difference in the time performances between the approaches, rather than the quality of the generalization results.

The experiments overall demonstrate that the constraints imposed by the Partitioned Least Squares approach can serve as a strong inductive bias when the partition knowledge is accurate. However, the technique encounters difficulties when analyzing datasets with many collinear features. Indeed, the current formulation of Partitioned Least Squares does not address this specific issue, suggesting that further research is needed to tackle this challenge.
 \section{Conclusions}
\label{sec:conclusions}

In this paper we presented an alternative least squares linear regression formulation. Our model enables scientists and practitioners to group features together into partitions, hence allowing the modeling of higher level abstractions which are easier to reason about. We provided rigorous proofs of the non-convexity of the problem and presented \algoalt and \algoopt, two algorithms to cope with the problem.

\algoalt is an iterative algorithm based on the alternating least squares method. The algorithm is proved to converge, but there is no guarantee that the accumulation point results in a globally optimal solution. On the contrary, as experiments have shown, the algorithm can be trapped in a local minimizer and return an approximate solution. Experiments suggest that it could be faster and preferable to the exact algorithm \algoopt in some circumstances (e.g., when the time needed to solve a single sub-problem is large and the application allows for sub-optimal answers).

\algoopt is an enumerative, exact, algorithm and our contribution includes a formal optimality proof. In our experimentation, we confirmed that it behaves very well under several different settings, although its time complexity grows exponentially with the number of groups. We argue that this exponential growth in time complexity should not impede its adoption: a large number of groups seems implausible in practical scenarios since it would undermine interpretability of the results and hence the attractiveness of the problem formulation. However, for the sake of completeness and to provide guidance to the interested reader, we provided a branch-and-bound solution that shares the same optimality guarantees of \algoopt. This latter formulation, depending on the actual structure of the problem as implied by the data, might save computation by pruning the search space, possibly avoiding to solve a large number of sub-problems. In Section \ref{ssec:exp-time-vs-obj} we have shown the benefits of this strategy when the number of partition sets increases, but we intend to further investigate this issue in future work.

In Section~\ref{ssec:model-quality}, we explore how the constraints introduced by the Partitioned Least Squares formulation impact the generalization properties of the inferred model. Our findings indicate that when the partition knowledge aligns with the underlying data distribution, the Partitioned Least Squares algorithms are very effective in leveraging this information. However, the results obtained from the Limpet dataset clearly demonstrate that collinearity can pose a challenge for the proposed technique and, indeed, neither the problem formulation, nor the proposed algorithms try to address this issue. We believe that addressing collinearity problems represents an interesting avenue for future research.

One topic for further research is about how to evaluate the partitions created by a domain expert. 
In this work, we have taken feature partitions ``at face value'' or otherwise assumed that an agreed-upon partitioning was developed by an expert.
Investigating the challenges of the (human) partitioning process, possibly by performing an interactive user study as suggested by Doshi-Velez et al. \cite{doshi2017towards}, is a possible avenue for future developments.

\section*{Funding}

None

\section*{Conflicts of interest/Competing interests}

None

\section*{Ethics approval}

Not applicable

\section*{Consent to participate}

Not applicable

\section*{Consent for publication}

Not applicable

\section*{Code availability}

A Julia~\citep{bezanson12fast} implementation of algorithms \algoopt, \algoalt, and \algobnb is available at~\url{https://github.com/ml-unito/PartitionedLS}; the code for the experiments is available at:~\url{https://github.com/ml-unito/PartitionedLS-experiments-2}.\\

\section*{Availability of data and material}

All datasets, with the exception of the LIMPET dataset, are publicly available. The LIMPET dataset can be obtained by contacting the authors of the original paper~\citep{caron13block}.

The repository for the experiments contains code to download and to pre-process the datasets (or the datasets themselves when not available for downloading) as well as the scripts to actually launch the experiments. Pre-processing consists in packing the data in a format suitable for the algorithms and to partition the data as mentioned in Section~\ref{sec:experiments}.

\section*{Authors' contributions}

Roberto Esposito, Marco Locatelli, and Mattia Cerrato all contributed to the conception and design of the work. Roberto Esposito and Marco Locatelli designed the original version of the algorithms. Roberto Esposito, Marco Locatelli and Mattia Cerrato designed the experiments and contributed to the writing of the manuscript. All authors read and approved the final manuscript.

\vskip 0.2in
\bibliography{biblio}

\newpage

\section*{Appendix}

\subsection*{Dataset Descriptions}
\label{sec:datasets}
\subsubsection*{Limpet dataset}
This dataset~\citep{caron16fast} contains 82 features describing measurements over simulated (VolSurf+ ~\citep{goodford85computational}) models of 44 drugs. The regression task is the prediction of the lipophilicity of the 44 compounds. The 82 features are partitioned into 6 groups according to the kind of property they describe. The six groups have been identified by domain experts and are characterized in~\citep{ermondi12molecular} as follows:
\begin{itemize}
    \item \textbf{Size/Shape}: 7 features describing the size and shape of the solute; 
    \item \textbf{OH2}: 19 features expressing the solute's interaction with water molecules;
    \item \textbf{N1}: 5 features describing the solute's ability to form hydrogen bond interactions with the donor group of the probe; 
    \item \textbf{O}: 5 features expressing the solute's ability to form hydrogen bond interactions with the acceptor group of the probe;
    \item \textbf{DRY}: 28 features describing the solute's propensity to participate in hydrophobic interactions;
    \item \textbf{Others}: 18 descriptors describing mainly the imbalance between hydrophilic and hydrophobic regions.
\end{itemize}
This dataset, while not high-dimensional in the broadest sense of the term, can be partitioned into well-defined, interpretable groups of variables. Moreover and perhaps more importantly, this is a clear case where the Partitioned Least Squares formulation is important to correctly handle the structure of the problem: each group contains variables describing phisical properties of the compound that are theoretically bound to act in the same ``direction'' on the target variable (its lipophilicity). Previous literature which employed this dataset has indeed focused on leveraging the data's structure to obtain explainable results~\citep{caron13block}. 
We used as training/test split the same one proposed in \citep{caron16fast}.

For this particular problem, the number of groups is $6$ and \algoopt needs to solve just $2^6=64$ convex problems. It terminates in $\sim 1.4$ seconds reaching a value of the objective function of about $4.3 \cdot 10^{-14}$ (note that the annotation ``$1e-13$'' at the top of the plot denotes that all values on the $y$ axis are to be multiplied by $10^{-13}$). \algoalt (Algorithm~\ref{algo:pls-a}) in this particular case is doing very well. Even though the plot shows that \algoopt reaches a better loss value, \algoalt starts already at a very low value of about $3 \cdot 10^{-13}$ requiring a fraction of the time needed by its optimal counterpart. It is also worth noting that, despite the small changes in the objective value reached by the two algorithms, the configuration of the $\vec{\alpha}$ and $\vec{\beta}$ variables are substantially different.

\subsubsection*{Facebook Comment Volume Dataset}

The Facebook Comment Volume dataset \citep{Sing1601:Facebook} contains more than 40 thousand training vectors along with 53 features. Each sample represents a post published on the social media service by a ``Facebook Page'', an entity which other users can follow and ``like'' so to receive updates on their Facebook activity. Features range from the number of users which ``like'' and follow the page to the number of comments the post received during different time frames. We removed the column which indicated whether a post was a paid advertisement, as this feature only contained 0 values, i.e., no advertisements were collected. Then, we divided the features into 5 blocks, each containing 10 features save for the last one which contained 11 features. The task here is to predict how many comments the same post will receive in the next few hours. The dataset is hosted at the UCI repository \citep{dua2019uci}. To keep training time and memory usage low, we limited the training samples to the first $15000$ examples of the training set. On this dataset, \algoopt is able to find the highest quality solution in less than 5 seconds. \algoalt with $T=20$ finds a similar quality solution after about 7 seconds.  \algoalt with $T=100$ takes more than 3 minutes to converge to a comparable objective value. 

\subsubsection*{Superconductivity dataset}

The Superconductivity dataset contains 81 features representing characteristics of superconductors. The dataset contains $21264$ examples. In our experiment we trained the model over the first $10000$ examples. The task is to predict a material's critical temperature. The features are derived from a superconductor's atomic mass, density and fusion heat among others. We refer the reader to the original paper~\citep{hamidieh2018data} for the specific details about the process.
In our experiment, we created 7 feature blocks with 10 features each and an additional one which contained 11 features. \algoopt takes $\sim 47$ seconds reaching an objective value of $\sim 2051$. At about the same computational cost,  \algoalt with $T=20$ reaches an objective of $\sim 2150$. It will take the algorithm about $\sim 440$ seconds to lower that figure to a loss objective value ($\sim 2072$) comparable to the one obtained by  \algoopt. Setting $T=100$ slightly improves the situation: after about $40$ seconds the loss objective is $\sim 2117$, which lowers to $\sim 2080$ after $\sim 186$ seconds and to $\sim 2064$ after $\sim 881$ seconds.

\subsubsection*{YearPredictionMSD Dataset}

We also propose an experimentation on the YearPredictionMSD dataset. It is a subset of the Million Songs dataset~\citep{Bertin-Mahieux2011}. When compared with the original dataset, it has about half the examples (around 500 thousands) and instead of the raw audio and metadata 90 timbre-related features are included. As for the Superconductivity dataset, we limited our experimentation to the first $10000$ examples. The target variable represents the year a song has been released in. In this dataset we experimented with 9 blocks of 10 to 12 features. \algoopt takes  $\sim 130$ seconds to reach the optimal loss at $\sim 920$.  \algoalt with $T=20$ is instead able to find a solution which is reasonably close ($\sim 922$) to the optimal one in a much shorter time (around 20 seconds). When $T=100$ is used instead,  \algoalt reaches a reasonable approximation only after $\sim 178$ seconds.

\begin{table}[h]
    \caption{Summary of the groups of features used in the Ames House Prices experiment. See the Kaggle~\cite{kaggle} repository for detailed information about the meaning of each feature label.\label{tab:ames-groups}}
    \begin{tabular}{p{0.3\columnwidth}p{0.6\columnwidth}}
    \toprule
    Group & Features \\
    \midrule
        LotDescritption & MSSubClass, MSZoning, LotFrontage, LotArea, Street, Alley, LotShape, LandContour, LotConfig, LandSlope \\[1em]
BuildingPlacement & Utilities, Neighborhood, Condition1, Condition2 \\[1em]
BuildingAge & YearBuilt, YearRemodAdd \\[1em]
BuildingQuality & BldgType, HouseStyle, OverallQual, OverallCond, RoofStyle, RoofMatl, Exterior1st, Exterior2nd, MasVnrType, MasVnrArea, ExterQual, ExterCond, Foundation, Functional \\[1em]
Basement & BsmtQual, BsmtCond, BsmtExposure, BsmtFinType1, BsmtFinSF1, BsmtFinType2, BsmtFinSF2, BsmtUnfSF, TotalBsmtSF \\[1em]
PowerAndTemperature & Heating, HeatingQC, CentralAir, Electrical, Fireplaces, FireplaceQu \\[1em]
Sizes & 1stFlrSF, 2ndFlrSF, LowQualFinSF, GrLivArea \\[1em]
Rooms & BsmtFullBath, BsmtHalfBath, FullBath, HalfBath, BedroomAbvGr, KitchenAbvGr, KitchenQual, TotRmsAbvGrd \\[1em]
OutsideFacilities & GarageType, GarageYrBlt, GarageFinish, GarageCars, GarageArea, GarageQual, GarageCond, PavedDrive, WoodDeckSF, OpenPorchSF, EnclosedPorch, 3SsnPorch, ScreenPorch, PoolArea, PoolQC, Fence \\[1em]
Various & MiscFeature, MiscVal, MoSold, YrSold, SaleType, SaleCondition \\
    \botrule
    \end{tabular}
\end{table}

\end{document}